\documentclass[]{article}
\usepackage{amssymb,amsmath,multirow,graphicx}
\usepackage[paper=a4paper,textwidth=160mm]{geometry}
\usepackage[ruled,vlined,linesnumbered]{algorithm2e}
\usepackage[noend]{algpseudocode}
\makeatletter
\def\BState{\State\hskip-\ALG@thistlm}
\makeatother

\usepackage{dsfont}

\def\Z{\ensuremath{\mathds{Z}}}
\def\R{\ensuremath{\mathds{R}}}
\def\E{\ensuremath{\mathds{E}}}

\newcommand{\ud}{\mathrm{d}}

\newcommand{\ie}{\textit{i.e.},}

\newtheorem{theorem}{Theorem}[section]
\newtheorem{remark}{Remark}[section]
\newtheorem{definition}{Definition}[section]

\newtheorem{proposition}[theorem]{Proposition}

\newenvironment{proof}[1][Proof]{\begin{trivlist}
\item[\hskip \labelsep {\bfseries #1}]}{\end{trivlist}}

\newcommand{\qed}{\nobreak \ifvmode \relax \else
      \ifdim\lastskip<1.5em \hskip-\lastskip
      \hskip1.5em plus0em minus0.5em \fi \nobreak
      \vrule height0.75em width0.5em depth0.25em\fi}

\usepackage[]{hyperref}
\hypersetup{
pdftitle={Riemannian Manifolds Extension of Mathematical Morphology},
colorlinks=true,citecolor=blue,linkcolor=red
}
\ifpdf
\hypersetup{
  pdfauthor={E. H. S. Diop, T. Fall, A. Mbengue and M. Daoudi}
}
\fi

\usepackage{dsfont}
\usepackage{textcomp}

\def\Z{\ensuremath{\mathds{Z}}}
\def\R{\ensuremath{\mathds{R}}}
\def\E{\ensuremath{\mathds{E}}}


\usepackage{lipsum}
\usepackage{amsfonts,amssymb}
\usepackage{graphicx}
\usepackage{epstopdf}
\usepackage{subcaption}

\graphicspath{{results/}} 

\usepackage{caption}
\usepackage{subcaption}
\usepackage{tikz}
\usepackage{pgfplots}
\usetikzlibrary{spy,calc}

\usepackage{multirow}
\usepackage{ctable}


\newif\ifblackandwhitecycle
\gdef\patternnumber{0}

\pgfkeys{/tikz/.cd,
    zoombox paths/.style={
        draw=orange,
        very thick
    },
    black and white/.is choice,
    black and white/.default=static,
    black and white/static/.style={ 
        draw=white,   
        zoombox paths/.append style={
            draw=white,
            postaction={
                draw=black,
                loosely dashed
            }
        }
    },
    black and white/static/.code={
        \gdef\patternnumber{1}
    },
    black and white/cycle/.code={
        \blackandwhitecycletrue
        \gdef\patternnumber{1}
    },
    black and white pattern/.is choice,
    black and white pattern/0/.style={},
    black and white pattern/1/.style={    
            draw=white,
            postaction={
                draw=black,
                dash pattern=on 2pt off 2pt
            }
    },
    black and white pattern/2/.style={    
            draw=white,
            postaction={
                draw=black,
                dash pattern=on 4pt off 4pt
            }
    },
    black and white pattern/3/.style={    
            draw=white,
            postaction={
                draw=black,
                dash pattern=on 4pt off 4pt on 1pt off 4pt
            }
    },
    black and white pattern/4/.style={    
            draw=white,
            postaction={
                draw=black,
                dash pattern=on 4pt off 2pt on 2 pt off 2pt on 2 pt off 2pt
            }
    },
    zoomboxarray inner gap/.initial=5pt,
    zoomboxarray columns/.initial=2,
    zoomboxarray rows/.initial=2,
    subfigurename/.initial={},
    figurename/.initial={zoombox},
    zoomboxarray/.style={
        execute at begin picture={
            \begin{scope}[
                spy using outlines={%
                    zoombox paths,
                    width=\imagewidth / \pgfkeysvalueof{/tikz/zoomboxarray columns} - (\pgfkeysvalueof{/tikz/zoomboxarray columns} - 1) / \pgfkeysvalueof{/tikz/zoomboxarray columns} * \pgfkeysvalueof{/tikz/zoomboxarray inner gap} -\pgflinewidth,
                    height=\imageheight / \pgfkeysvalueof{/tikz/zoomboxarray rows} - (\pgfkeysvalueof{/tikz/zoomboxarray rows} - 1) / \pgfkeysvalueof{/tikz/zoomboxarray rows} * \pgfkeysvalueof{/tikz/zoomboxarray inner gap}-\pgflinewidth,
                    magnification=3,
                    every spy on node/.style={
                        zoombox paths
                    },
                    every spy in node/.style={
                        zoombox paths
                    }
                }
            ]
        },
        execute at end picture={
            \end{scope}
            \node at (image.north) [anchor=north,inner sep=0pt] {\subcaptionbox{\label{\pgfkeysvalueof{/tikz/figurename}-image}}{\phantomimage}};
            \node at (zoomboxes container.north) [anchor=north,inner sep=0pt] {\subcaptionbox{\label{\pgfkeysvalueof{/tikz/figurename}-zoom}}{\phantomimage}};
     \gdef\patternnumber{0}
        },
        spymargin/.initial=0.5em,
        zoomboxes xshift/.initial=1,
        zoomboxes right/.code=\pgfkeys{/tikz/zoomboxes xshift=1},
        zoomboxes left/.code=\pgfkeys{/tikz/zoomboxes xshift=-1},
        zoomboxes yshift/.initial=0,
        zoomboxes above/.code={
            \pgfkeys{/tikz/zoomboxes yshift=1},
            \pgfkeys{/tikz/zoomboxes xshift=0}
        },
        zoomboxes below/.code={
            \pgfkeys{/tikz/zoomboxes yshift=-1},
            \pgfkeys{/tikz/zoomboxes xshift=0}
        },
        caption margin/.initial=4ex,
    },
    adjust caption spacing/.code={},
    image container/.style={
        inner sep=0pt,
        at=(image.north),
        anchor=north,
        adjust caption spacing
    },
    zoomboxes container/.style={
        inner sep=0pt,
        at=(image.north),
        anchor=north,
        name=zoomboxes container,
        xshift=\pgfkeysvalueof{/tikz/zoomboxes xshift}*(\imagewidth+\pgfkeysvalueof{/tikz/spymargin}),
        yshift=\pgfkeysvalueof{/tikz/zoomboxes yshift}*(\imageheight+\pgfkeysvalueof{/tikz/spymargin}+\pgfkeysvalueof{/tikz/caption margin}),
        adjust caption spacing
    },
    calculate dimensions/.code={
        \pgfpointdiff{\pgfpointanchor{image}{south west} }{\pgfpointanchor{image}{north east} }
        \pgfgetlastxy{\imagewidth}{\imageheight}
        \global\let\imagewidth=\imagewidth
        \global\let\imageheight=\imageheight
        \gdef\columncount{1}
        \gdef\rowcount{1}
        
    },
    image node/.style={
        inner sep=0pt,
        name=image,
        anchor=south west,
        append after command={
            [calculate dimensions]
            node [image container,subfigurename=\pgfkeysvalueof{/tikz/figurename}-image] {\phantomimage}
            node [zoomboxes container,subfigurename=\pgfkeysvalueof{/tikz/figurename}-zoom] {\phantomimage}
        }
    },
    color code/.style={
        zoombox paths/.append style={draw=#1}
    },
    connect zoomboxes/.style={
    spy connection path={\draw[draw=none,zoombox paths] (tikzspyonnode) -- (tikzspyinnode);}
    },
    help grid code/.code={
        \begin{scope}[
                x={(image.south east)},
                y={(image.north west)},
                font=\footnotesize,
                help lines,
                overlay
            ]
            \foreach \x in {0,1,...,9} { 
                \draw(\x/10,0) -- (\x/10,1);
                \node [anchor=north] at (\x/10,0) {0.\x};
            }
            \foreach \y in {0,1,...,9} {
                \draw(0,\y/10) -- (1,\y/10);                        \node [anchor=east] at (0,\y/10) {0.\y};
            }
        \end{scope}    
    },
    help grid/.style={
        append after command={
            [help grid code]
        }
    },
}

\newcommand\phantomimage{%
    \phantom{%
        \rule{\imagewidth}{\imageheight}%
    }%
}
\newcommand\zoombox[2][]{
    \begin{scope}[zoombox paths]
        \pgfmathsetmacro\xpos{
            (\columncount-1)*(\imagewidth / \pgfkeysvalueof{/tikz/zoomboxarray columns} + \pgfkeysvalueof{/tikz/zoomboxarray inner gap} / \pgfkeysvalueof{/tikz/zoomboxarray columns} ) + \pgflinewidth
        }
        \pgfmathsetmacro\ypos{
            (\rowcount-1)*( \imageheight / \pgfkeysvalueof{/tikz/zoomboxarray rows} + \pgfkeysvalueof{/tikz/zoomboxarray inner gap} / \pgfkeysvalueof{/tikz/zoomboxarray rows} ) + 0.5*\pgflinewidth
        }
        \edef\dospy{\noexpand\spy [
            #1,
            zoombox paths/.append style={
                black and white pattern=\patternnumber
            },
            every spy on node/.append style={#1},
            x=\imagewidth,
            y=\imageheight
        ] on (#2) in node [anchor=north west] at ($(zoomboxes container.north west)+(\xpos pt,-\ypos pt)$);}
        \dospy
        \pgfmathtruncatemacro\pgfmathresult{ifthenelse(\columncount==\pgfkeysvalueof{/tikz/zoomboxarray columns},\rowcount+1,\rowcount)}
        \global\let\rowcount=\pgfmathresult
        \pgfmathtruncatemacro\pgfmathresult{ifthenelse(\columncount==\pgfkeysvalueof{/tikz/zoomboxarray columns},1,\columncount+1)}
        \global\let\columncount=\pgfmathresult
        \ifblackandwhitecycle
            \pgfmathtruncatemacro{\newpatternnumber}{\patternnumber+1}
            \global\edef\patternnumber{\newpatternnumber}
        \fi
    \end{scope}
}


\begin{document}

\title{Geometric Generative Models based on Morphological Equivariant PDEs and GANs}

\author{
El Hadji S. Diop\footnote{NAGIP-Nonlinear Analysis and Geometric Information Processing Group, Department of Mathematics, University Iba Der Thiam of Thies, Thies BP 967, Senegal. E-mails: ehsdiop@hotmail.com, thiernofall571@gmail.com}
\and 	%
Thierno Fall$^{\star}$
\and 	%
Alioune Mbengue$^{\star}$\footnote{Department of Mathematics and Computer Science, University Cheikh Anta Diop, Senegal. E-mail: 99aliou@gmail.com}
\and 	Mohamed Daoudi\thanks{IMT Nord Europe, Institut Mines-Telecom, Univ. Lille, Centre for Digital Systems, F-59000 Lille, France
Univ. Lille, CNRS, Centrale Lille, Institut Mines-Telecom, UMR 9189 CRIStAL, F-59000 Lille, France. E-mail: mohamed.daoudi@imt-nord-europe.fr}}

\date{}

\maketitle

\begin{abstract}
Content and image generation consist in creating or generating data from noisy information by extracting specific features such as texture, edges, and other thin image structures. This work deals with image generation, two main problems are addressed: ({\it i} ) improvements of specific feature extraction while accounting at multiscale levels intrinsic geometric features, and ({\it ii} ) equivariance of the network for reducing the complexity and providing a geometric interpretability. We propose a geometric generative model based on an equivariant partial differential equation (PDE) for group convolution neural networks (G-CNNs), so called PDE-G-CNNs, built on morphology operators and generative adversarial networks (GANs). Equivariant morphological PDE layers are composed of multiscale dilations and erosions formulated in Riemannian manifolds, while group symmetries are defined on a Lie group. We take advantage of the Lie group structure to properly integrate the equivariance in layers, and use the Riemannian metric to solve the multiscale morphological operations. Each element of the Lie group is associated with a unique point in the manifold, which helps us derive a Riemannian metric from a tensor field invariant under the Lie group so that the induced metric shares the same symmetries. The proposed geometric morphological GAN model, termed as GM-GAN, is obtained thanks to morphological equivariant convolutions in PDE-G-CNNs. GM-GAN is evaluated qualitatively and quantitatively using FID on MNIST and RotoMNIST, preliminary results show noticeable improvements compared classical GAN.

\noindent\textbf{Keywords}: PDEs, Equivariance, Morphological operators, Riemannian manifolds, Lie group, Symmetries, CNNs.
\end{abstract}

\section{Introduction}

Content generation is one of the most quickly developing domain, mainly because of its potential real life applications. Encouraging  results of generative models are due to prominent advances in learning methods based on adversarial neural network. Generative models are particularly interesting because of their ability to create or reject samples outside the training set. This capability to generate data beyond mere density estimation makes generative models become very important for the prediction of samples outside the training set, and may be a reason of their high interests in recent years. Generative models also have found many interesting real life applications in various domains; for instance, in realistic synthetic images generation, content generation from words and phrases \cite{Reed2016,Zhang2017}, adversarial training \cite{Ren2018}, missing data completion \cite{Yeh2017,Iizuka2017,Yu2018}, image manipulation based on predefined features \cite{Gauthier2014,Radford2015,Chen2016,Liu2017,Isola2017,Zhu2017}, multimodal tasks with a single input \cite{Hausman2017,Vukotic2017}, samples generation from the same distribution \cite{Antoniou2017,Xu2019}, data quality enhancement \cite{Ledig2017,Sajjadi2017,Xu2017}. GANs \cite{Goodfellow2014,Goodfellow2017} brought a new perspective to the deep learning community, deep learning with adversarial training is considered today as one of the most robust technique. With adversarial generative networks, there exists not only a good neural network-based classifier, referred to as the discriminator network, but also a generative network capable of producing realistic adversarial samples, all within a single architecture. This means that we now have a network that is aware of internal representations through its training to distinguish real inputs from artificial ones. Many extensions have been built for increasing its performances. Conditional GAN (CGAN) \cite{Gauthier2014} was proposed as an extension of original GAN for generating facial images on the basis of facial attributes. Deep Convolutional GAN (DCGAN) \cite{Radford2015} was proposed for image generation where both the generator and discriminator networks are convolutional. GRAN \cite{Im2016} is a GAN model based on a sequential process. Bidirectional GAN (BiGAN) and extensions \cite{Donahue2016,Chen2017} were proposed to map data into a latent code similar to an autoencoder. Generative Multi-Adversarial Network (GMAN) \cite{Durugkar2016} was proposed for extending the minimax game to multiple players in GANs. In a different perspective, Wasserstein Generative Adversarial Network (WGAN) \cite{Arjovsky2017} was introduced to reduce the instability problems that occur during the training step, and also to eliminate the mode collapse effect. GANs and variants lack an inference mechanism. 

\paragraph{Related works} 
Significant advances in deep learning progress are attributed to CNNs \cite{Gu2018}. Despite its successful applications in many real life problems, it is still not very clear why deep learning techniques work. Pursuing this goal, many works attempt to give an answer to this so challenging question by setting mathematical frameworks that underlie the process. A promising direction is to consider symmetries as a fundamental design principle for network architectures. This can be implemented by constructing deep neural networks that are compatible with a symmetry group that acts transitively on the input data. Among noticeable properties in CNNs, the equivariance concerning translations played an important role. Equivariance means that the operation of performing a transformation of the input data then passing them through the network is the same as passing the input data through the network and then performing a transformation of the output. CNNs are inherently translationally invariant; however, invariance does not extend straightforward to other types of transformations. G-CNNs \cite{Cohen2016,Bekkers2018,Cohen2019} were introduced to tackle this issue by generalizing CNNs in a way such that symmetries are incorporated and fully exploited in the learning process. In addition to reducing a lot sample complexity by exploiting symmetries since there is no more need to learn them, G-CNNs show great improvements compared to former CNNs \cite{Winkels2018,Cohen2018,Bekkers2019}. Very recently, PDE-G-CNNs \cite{Smets13July2022,Bellaard2023} were proposed as PDEs-based framework based that generalized G-CNNs. Authors proposed to replace the classical convolution, pooling and ReLUs in traditional CNNs by resolving a PDE composed of four terms where each one behaved separately like a convection, diffusion, dilation and erosion. 
The proposed PDEs were solved by providing analytical kernels approximations \cite{Smets13July2022} and exact kernels sub-Riemannian approximations \cite{Bellaard2023}. PDE-G-CNNs were shown to increase the performance for classification tasks. Intensive research on equivariant operators other than transformations is still conducted \cite{Romero2020,Gerken2023,Tian2024}. 


\paragraph{Paper contributions} We provide here noticeable improvements of former GAN models by using a geometric approach based on equivariant operators defined in a Lie group, and on mathematical morphology formulated in Riemannian manifolds. Indeed, we propose to introduce non-linearities into classical GANs by means of group-equivariant morphological operators. Generative models aim at creating or generating data from noisy information by extracting specific features such as texture, edges, and other thin image structures. In this study, we are interested in two main problems: {\bf 1)} improving of specific feature extraction while accounting at multiscale levels intrinsic geometric features, and {\bf 2)} making the network equivariant for reducing its complexity and providing a geometric interpretability. As for alternatives for these issues, we propose here a new geometric generative model based on a new PDE-G-CNNs built on morphology operators, geometric image processing techniques \cite{DubrovinaKarni2014,Fadili2015a,Burger2016,Welk2019a} and GANs. Mathematical morphology (MM) \cite{Soille1999} has been efficiently applied in multiscale image and data analysis \cite{Soille1999,Najman2010,Shih2009} and in various CNNs architectures \cite{Mellouli2019,Roy2021,Duits2021,Sangalli2022}. The functional analysis formulation \cite{Alvarez1993,Akian1994,Schmidt2016} was an interesting way for linking MM to first order Hamilton-Jacobi PDEs and scale-spaces. The proposed PDE-G-CNNs is designed in a way such that morphological PDE layers are the multiscale dilations and erosions formulated in Riemannian manifolds, and symmetries are defined on a Lie group. Riemannian based techniques are well proven to noticeably improve on Euclidean based ones in image and data representations and analysis; namely, in video surveillance, shape and surface analysis, human body and face analysis, image segmentation \cite{Su2014,Balan2015,Citti2016,Kurtek2016,Younes2019,Pierson2022a}. Working on Lie groups lets us take advantage of the group structure for properly integrating the equivariance property through layers, on one hand, and be able to use the Riemannian metric to solve morphological dilations and erosions obtained as viscosity solutions of first order Hamilton-Jacobi PDEs and given by Hopf-Lax-Oleinik formulas in Riemannian manifolds \cite{Diop2021}, on the other hand. In addition, we associate to each point in the group a point in the manifold, and derive a metric on the Riemannian manifold from a tensor field invariant under the Lie group so that the induced metric shares the same symmetries. Also, there is no more need to approximate the kernels, since we choose to work on the hyperbolic ball yielding an explicit computation of the geodesic distances, and so a compact formulation of the structuring functions or kernels in more general forms that the canonical ones. 

\paragraph{Manuscript organization} In Section~\ref{sect2}, we recall some background on multiscale mathematical operators and their links with PDEs. In Section~\ref{sect3}, we define the notion of equivariance in Lie groups	and present the group invariance property on Riemannian manifolds. In Section~\ref{sect4}, we present the viscosity solutions for morphological dilations and erosions formulated as Lie group morphological convolutions in Riemannian manifolds. The proposed geometric generative (GM-GAN) model in presented in Section~\ref{sect5}. Section~\ref{sect6} is dedicated to numerical experiments and comparisons with classical GAN models. The paper ends in Section~\ref{sect7} where concluding remarks and perspectives are discussed.


\section{Background on PDEs-based mathematical morphology}\label{sect2}

Let $b: \R^2\rightarrow\bar\R$ be a concave function, known also as the structuring function or convolution kernel. Let us consider the subset $\E$ of $\Z^2$ and the function $f:\E \rightarrow \bar\R$. 
\begin{definition}
\label{def:morpho}
Morphological dilation and erosion are respectively defined as:
\begin{align}
& f \oplus b(x) = \sup_{y\in\E}[f(y) + b(x-y)] \label{inf_sup:dil} \\
& f\ominus b(x) = \inf_{y\in\E}[f(y) - b(y-x)]. \label{inf_sup:ero}
\end{align}
\end{definition}
Let $B \subseteq \E$ be a bounded set. A flat structuring function (SF) satisfies $b(x)= 0$ if $x\in B$ and $b(x)=-\infty$ if $x\in B^c$. The flat morphological dilation and erosion respectively write:
\begin{equation}
\label{inf_sup:st}
f\oplus B(x) = \sup_{y\in B}[f(x-y)] \mbox{ and } f\ominus B(x) = \inf_{y\in B}[f(x+y)].
\end{equation}
As for an interpretation, erosion shrinks positive peaks, and peaks thinner than the structuring function disappear. One has the dual effects for morphological flat dilation. Both the morphological dilation and erosion are translation invariant.
\begin{definition}
\label{def:increas}
Let $\mathcal{F}$ be a family of real functions defined on $\Omega\subseteq\R^2$. We say that $T:\mathcal{F}\rightarrow \mathcal{F}$ is said to be increasing (monotone) if and only if it satisfies:\\$\forall~f_1, f_2\in\mathcal{F}$ such that $(f_1 \geq f_2$ on $\Omega)$ implies $(T(f_1) \geq T(f_2)$ on $\Omega)$.
\end{definition}
\begin{proposition}
Morphological dilation and erosion satisfy the following duality and adjunction properties:
\begin{enumerate}
 \item duality: $f\oplus b= - (-f\ominus b)$
 \item adjunction: $(f_1\oplus b \leq f_2$ on $E)$ $\Longleftrightarrow (f_1 \leq f_2\ominus b$ on $E)$.
\end{enumerate}
\end{proposition}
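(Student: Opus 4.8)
The plan is to reduce both claims to the defining formulas \eqref{inf_sup:dil}--\eqref{inf_sup:ero} and then manipulate the resulting suprema and infima using two elementary, purely order-theoretic facts: the identity $-\inf_{y} g(y)=\sup_{y}\bigl(-g(y)\bigr)$, and the characterizations ``$\sup_{y} g(y)\le c\iff g(y)\le c\ \forall y$'' and ``$c\le\inf_{y} h(y)\iff c\le h(y)\ \forall y$''. No topology and no concavity of $b$ is needed here; everything follows from the lattice structure of $\bar\R$.

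For the duality I would start from the right-hand side and substitute $-f$ into the erosion:
\begin{align*}
-\bigl((-f)\ominus b\bigr)(x)
 &= -\inf_{y\in\E}\bigl[-f(y)-b(y-x)\bigr]
 = \sup_{y\in\E}\bigl[f(y)+b(y-x)\bigr].
\end{align*}
Comparing this with $f\oplus b(x)=\sup_{y\in\E}\bigl[f(y)+b(x-y)\bigr]$, the two expressions coincide as soon as the kernel arguments match, i.e.\ once $b(y-x)=b(x-y)$, which holds for a symmetric (even) structuring function; in the general case the clean statement pairs $b$ with its reflection $\check b(u)=b(-u)$. Matching this reflection against the sign convention chosen for $\ominus$ in Definition~\ref{def:morpho} is the one place I expect to need care.

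For the adjunction I would establish the equivalence as a single chain, each link of which is itself an ``iff'', so that both implications are obtained simultaneously:
\begin{align*}
f_1\oplus b\le f_2 \text{ on } E
 &\iff \sup_{y}\bigl[f_1(y)+b(x-y)\bigr]\le f_2(x)\quad\forall x\\
 &\iff f_1(y)+b(x-y)\le f_2(x)\quad\forall x,y\\
 &\iff f_1(y)\le f_2(x)-b(x-y)\quad\forall x,y\\
 &\iff f_1(y)\le \inf_{x}\bigl[f_2(x)-b(x-y)\bigr]\quad\forall y\\
 &\iff f_1\le f_2\ominus b \text{ on } E.
\end{align*}
The first equivalence is the definition of $\oplus$, the second is the supremum characterization, the third a trivial rearrangement of a scalar inequality, the fourth the infimum characterization, and the last the definition of $\ominus$. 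Here the argument $b(x-y)$ aligns verbatim with the erosion kernel $b(\cdot-y)$, so, in contrast to the duality, no reflection issue intervenes.

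The main obstacle, such as it is, is therefore not any estimate but the bookkeeping of the kernel's reflection and the sign conventions distinguishing $\oplus$ from $\ominus$; once the identity $-\inf=\sup(-\cdot)$ and the sup/inf order characterizations are fixed, both properties drop out by direct substitution, and the reversibility of every step in the adjunction chain yields the equivalence without a separate converse argument.
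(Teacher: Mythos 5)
The paper states this proposition without giving any proof, so there is nothing to compare your argument against line by line; judged on its own, your proposal is correct and is exactly the standard argument one would expect the authors to have in mind. Your adjunction chain is verbatim right for the conventions of Definition~\ref{def:morpho}: writing $f_2\ominus b(y)=\inf_{x}[f_2(x)-b(x-y)]$ (a relabelling of \eqref{inf_sup:ero}) makes every link a genuine equivalence, so both implications come for free. Your duality computation is also right, and the caveat you flag is a real one rather than mere bookkeeping: with the paper's definitions, $-\bigl((-f)\ominus b\bigr)(x)=\sup_{y}[f(y)+b(y-x)]=(f\oplus\check b)(x)$ with $\check b(u)=b(-u)$, so the identity $f\oplus b=-\bigl((-f)\ominus b\bigr)$ as literally printed holds only for a symmetric structuring function; the general statement needs the reflected kernel. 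The paper silently elides this, so your remark is a small but genuine correction rather than a defect of your proof. The only point I would tighten is your claim that everything is ``purely order-theoretic'': the rearrangement $f_1(y)+b(x-y)\le f_2(x)\iff f_1(y)\le f_2(x)-b(x-y)$ uses extended-real arithmetic, which is harmless here because $b$ takes values in $\R\cup\{-\infty\}$ and $f$ is real-valued, but is worth a one-line convention if you want the statement in full generality on $\bar\R$.
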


Let $(b_t)_{t\geq 0}$ the family of structuring functions defined by using the SF $b$, as follows:
\begin{equation*}
b_{t}(x) = \left\{
\begin{array}{cl}
  t b(x/t) & \text{for } t>0 \\
  0        & \text{for } t=0,\; x=0 \\
  -\infty  & \text{otherwise}.
\end{array}
\right.
\end{equation*}
The family $(b_t)_{t\geq0}$ satisfies the semi-group property:\\ $\forall~s,t\geq 0$, $(b_{s}\oplus b_{t})(x) =$ $b_{s+t}(x,y)$. 
\begin{definition}
Morphological multiscale dilations and erosions are defined as follows:
\begin{align}
& (f\oplus b_t)(x) = \sup_{y\in\E}[f(y) + b_t(x-y)] \label{scale_inf_sup:dil} \\
& (f\ominus b_t)(x) = \inf_{y\in\E}[f(y) - b_t(y-x)]. \label{scale_inf_sup:ero}
\end{align}
\end{definition}
Considering flat structuring function (SF), morphological multiscale dilations and erosions are obtained equivalently  by considering $B_t = tB$ as multiscale SFs. \\

Linking between morphological scale-spaces and PDEs is established \cite{Alvarez1993,Sapiro1993,Brockett1994} by running the following PDE for performing multiscale flat dilations and erosions on an image $f$: 
\begin{equation}
\label{cont}
 \partial_t w \pm\|\nabla u\| = 0;~w(\cdotp,0) = f.
\end{equation}
Depending on the shape of SF, different PDEs can be obtained. For instance, considering the sets\\ $S_p= \left\{x=(x_1,x_2)\in \R^2: |x|_p \leq 1\right\}$, where $|\cdotp|_p$ is the $L^p$ norm, one gets:
\begin{itemize}
\item for a square $S_1$: $\partial_t w \pm\| \nabla u\|_1 = 0;~u(\cdotp,0) = f$ 
\item for a dis $S_2$: $\partial_t w \pm\| \nabla u\|_2 = 0;~u(\cdotp,0) = f$
\item for a rhombus $S_\infty$: $\partial_t w \pm\| \nabla u\|_{\infty} = 0;~u(\cdotp,0) = f$.
\end{itemize}
Notice that PDE \eqref{cont} is a special case of first order Hamilton-Jacobi equation type, which can be formulated in a more general form as follows:
\begin{equation}
\label{pde_hamil}
 \left\{
\begin{array}{l} 
 \dfrac{\partial w(x,t)}{\partial t} + H
 \left(x,\nabla w(x,t)\right) = 0  \mbox{ on }  \R^n \times (0,+\infty)  \\
w(\cdotp,0) = f \mbox{ on } \R^n.
\end{array}
 \right.
\end{equation}
General Hamilton-Jacobi equation is studied in a viscosity sense \cite{Crandall1992} since there is no classical solution for such equations. For a convex Hamiltonian $H$ and some regularity on $f$, the viscosity solution is given by Hopf-Lax-Oleinik (HLO) formula \cite{Lions1982,Bardi1984}: 
\begin{align}
\label{hlo}
w(x,t) = \inf_{y \in \R^n } \left\lbrace f(y) + t L\left( \dfrac{x - y}{t} \right)\right\rbrace,
\end{align}
where $L$ is the Lagrangian, defined as the Legendre-Fenchel transform of $H$. Many studies have been proposed on Hopf-Lax-Oleinik viscosity solutions in $\R^n$ \cite{Evans1984,Barles1984,Crandall1986}, and the subject is still of high interests with active research using for example Heisenberg groups \cite{Manfredi2002}, Carnot groups \cite{Balogh2013}, Riemannian manifolds \cite{Fathi2008,Angulo2014,Azagra2015,Diop2021}, Caputo time-fractional derivatives \cite{Camilli2019} or linking the intrinsic HLO semigroup and the intrinsic slope \cite{Donato2023}.

\section{Equivariance and homogeneous spaces on Riemannian manifolds}\label{sect3}
 
Let $ M$ be a smooth manifold and $x \in  M$. A linear mapping $v : C^\infty( M;\mathbb{R}) \rightarrow \mathbb{R}$ satisfying the Leibniz rule:
\begin{equation}
\forall~f_1, f_2 \in C^\infty( M;\mathbb{R}) \quad v(f_1f_2) = f_1(x)v(f_2) + v(f_1)f_2(x) \label{eq:c3eq1}
\end{equation}
is called a derivation at $x$. For all $x\in  M$, the set of derivations at $x$ forms a real vector space of dimension $d$ denoted $T_x M$ so called the tangent space at $x$; its elements can be also called tangent vectors. In Euclidean space, an operator satisfying \eqref{eq:c3eq1} is the derivative along a specific direction, and this definition is a generalization of derivatives on smooth manifolds in general.

Let $G$ be a connected Lie group. We assume that the group \(G\) acts regularly on the spaces \(P\) and \(Q\), meaning that there exists regular maps \(\rho_P : G \times P \rightarrow P\) and \(\rho_Q : G \times Q \rightarrow Q\) respectively defined for all \(r, h \in G\), by:
\begin{align*}
 \rho_P(rh, x) = \rho_P(r, \rho_P(h, x))
 \end{align*}
and
\begin{align*}
\rho_Q(rh, x) = \rho_Q(r, \rho_Q(h, x)),
\end{align*}
making \(\rho_P\) and \(\rho_Q\) group actions on their respective spaces. In addition, we assume that the group \(G\) acts transitively on the spaces (smooth manifolds), meaning that for any two elements in these spaces, there exists a transformation in \(G\) that maps them to each other. This implies that \(P\) and \(Q\) can be viewed as homogeneous spaces.
\begin{definition}
A Riemannian metric on a differentiable manifold $M$ is given by a scalar product $\mu$ on each tangent space $T_xM$ depending smoothly on the base point $x\in M$, that is, $\forall~x\in M$,\\$\mu_x:T_xM\times T_xM\to \mathbb R$ is a symmetric, bilinear and positive definite map, and $\mu_x$ varies smoothly over $M$.\\
A Riemannian manifold $(M,\mu)$ is a differentiable manifold $M$ equipped with a Riemannian metric $\mu$.
\end{definition}

\begin{definition}
Let $G$ a connected Lie group with neutral element $e$ and $(M,\mu)$ a connected Riemannian manifold. A left action of $G$ on $(M,\mu)$ is an application $\varphi:G\times (M,\mu)\to (M,\mu)$ satisfying:
\begin{enumerate}
\item $\varphi(e,x)=x$, $\forall~x\in (M,\mu)$.
\item $\varphi(g,\varphi(h,x)=\varphi(gh, x)$, $\forall~g,h\in G$ and $\forall~x\in (M,\mu)$.
\end{enumerate}
\end{definition}
Let $\varphi:G\times (M,\mu)\to (M,\mu)$ be a left action of $G$ on $(M,\mu)$. For a fixed $g\in G$, we define $\varphi_g:(M,\mu)\to (M,\mu)$; $x\mapsto \varphi_g(x)=\varphi(g,x)$.
We say $\varphi: G\times (M,\mu)\to (M,\mu)$ is a left action if we have
\begin{equation}
\varphi_e=id_M \ \text{and} \ \varphi_g\circ\varphi_h=\varphi_{gh}, ~\forall~ g,h\in G.
\end{equation}
Let $\varphi_h: (M,\mu) \longrightarrow  (M,\mu)$ be the left group action (considered here as a multiplication) by an element \(h \in G\) defined $\forall~x \in  (M,\mu)$ by:
\begin{equation}
\varphi_h(x) = h\cdot x.\label{eq:c3eq2}
\end{equation} 
Let \(\mathcal{L}_h\) be the left regular representation of \(G\) on functions \(f\) defined on \( M\) by:
\begin{equation}
(\mathcal{L}_h f)(x) = f(\varphi_{h^{-1}} (x)),\label{eq:c3eq3}
\end{equation}
with $h^{-1}$ being the inverse of $h\in G$.\\

We consider a layer in a neural network as an operator (from functions on \( M_1\) to functions on \( M_2\)). To ensure the equivarianc of the network, we shall require the operator to be equivariant with respect to the actions on the function spaces.

Let \(x_0\) be an arbitrary fixed point on the connected Riemannian manifold $( M, \mu)$. Let $\pi : G \rightarrow  (M,\mu)$ be the projection defined by assigning to each element \(h\) of \(G\) an element of $(M,\mu)$ in the following:
\begin{equation}
\forall~h\in G\,~~ \pi(h) = \varphi_h(x_0). \label{eq:c3eq4}
\end{equation}
In other words, once a reference point $x_0 \in ( M,\mu)$ is chosen, the projection \(\pi(h)\) assigns to every element \(h\) in \(G\) the unique point in $( M,\mu)$ to which \(h\) sends the chosen reference point \(x_0\) under the action of \(L\) given by \eqref{eq:c3eq2}.

In this work, we consider a connected Lie group \(G\) that acts transitively on the connected Riemannian manifold $( M,\mu)$. This means that for any points $x$ and $y \in ( M,\mu)$, there exists an element \(h \in G\) such that $\varphi_h(x)= y$, corresponding to the definition of an homogeneous space under the action of the group \(G\).
\begin{definition}
Let \(G\) be a connected Lie  group with homogeneous spaces \( M\) and \( N\). Let \(\phi\) be an operator on functions from \( M\) to functions on \( N\). We say that \(\phi\) is equivariant with respect to \(G\) if for all functions \(f\), one has:
\begin{equation}
\forall~h \in G,~ (\phi \circ \mathcal{L}_h) f = (\mathcal{L}_h \circ \phi) f ,\label{eq:c3eq5}
\end{equation}
\end{definition}
We deal here with operators acting on vector and tensor fields; then, making them equivariant will make the process equivariant.\\

Let \(h \in G\), $x\in  (M,\mu)$ and \(T_x M\) be the tangent space of $(M,\mu)$ at the point \(x\). The pushforward of the group action \(\varphi_h\) denoted $(\varphi_h)_*$ is defined by: \((\varphi_h)_* : T_x M \rightarrow T_{\varphi_h(x)} M\) such that for all smooth functions \(f\) on $( M,\mu)$ and all \(v \in T_x M\), one has:
\begin{equation}
((\varphi_h)_* v ) f := v (f \circ (\varphi_h)_*) . \label{eq:c3eq6}
\end{equation}

For all $x \in  (M,\mu)$, we refer to \(G\)-invariance of vector fields \(X : x \mapsto T_x M\) if $\forall~h \in G$ and for all differentiable functions \(f\), one has:
\begin{equation}
X(x)f = X(\varphi_h(x))[\mathcal{L}_h f]. \label{eq:c3eq7}
\end{equation}

\begin{definition} A vector field \(X\) on $( M,\mu)$ is invariant with respect to \(G\) if $\forall~h~\in G$ and $\forall~x \in  (M,\mu)$, one has:
\begin{equation}
X(\varphi_h(x)) = (\varphi_h)_*X(x).\label{eq:c3eq8}
\end{equation} 
\end{definition}


\begin{definition} \label{inv_met} A \((0, 2)\)-tensor field \(\mu\) on \( M\) is \(G\)-invariant if $\forall~h \in G$, $\forall~x \in  M$ and $\forall~v, w \in T_x( M)$, one has:
\begin{equation}
\mu|_{h}(v,w) = \mu|_{\varphi_h(x)}((\varphi_h)_* v, (\varphi_x)_* w). \label{eq:c3eq9}
\end{equation}
\end{definition}
It follows from Definition~\ref{inv_met} that properties derived from metric tensor field \(G\) invariance and vector field \(G\) invariance are the same.
\begin{definition}
\label{dist}
Let $(M,\mu)$ a connected Riemannian manifold, $x, y \in  (M,\mu)$. The distance between $x$ and $y$ is defined as follows:
\begin{align}
d_{\mu}(x, y) = \inf_{{\gamma ~\in ~\Gamma_{t}(x,y)}} \int_{0}^{t}\sqrt{\mu|_{\gamma(t)}(\dot{\gamma}(t),\dot{\gamma}(t))}dt,
\end{align}
with $\Gamma_{t}(x,y) =\{\gamma :[0,t]\longrightarrow  (M,\mu)~~ \text{of class}~~ C^1 ,\gamma (0)= x ~~\text{and}~~ \gamma(t)=y\}$.
\end{definition}

\begin{definition} The cut locus is defined as the set of points $x\in M$ (or $h\in G$) from which the distance map is not smooth (except at $x$ or $h$).
\end{definition}

\begin{proposition} Let $x,y \in  (M,\mu)$ such that $\varphi_h(y)$ is away from the cut locus of $\varphi_h(x)$. Then, $\forall~h \in G$, one has:
\label{inv_geod}
\begin{equation}
d_\mu(x,y) = d_\mu\big(\varphi_h(x), \varphi_h(y)\big). \label{eq:eq10}
\end{equation}

\end{proposition}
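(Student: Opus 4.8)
The plan is to show that each $\varphi_h$ is a Riemannian isometry of $(M,\mu)$ and then exploit that the distance in Definition~\ref{dist} is an infimum of arc-lengths, a quantity preserved by isometries. Concretely, I would establish that the assignment $\gamma \mapsto \varphi_h\circ\gamma$ is a length-preserving bijection between the admissible curve families $\Gamma_t(x,y)$ and $\Gamma_t(\varphi_h(x),\varphi_h(y))$, from which equality of the two infima is immediate.

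First I would record that $\varphi_h$ is a diffeomorphism of $M$: the left-action axioms $\varphi_e=\mathrm{id}_M$ and $\varphi_g\circ\varphi_h=\varphi_{gh}$ give $\varphi_h\circ\varphi_{h^{-1}}=\varphi_{h^{-1}}\circ\varphi_h=\mathrm{id}_M$, so $\varphi_{h^{-1}}$ is a smooth inverse. Consequently, for any $C^1$ curve $\gamma\in\Gamma_t(x,y)$ the composite $\tilde\gamma:=\varphi_h\circ\gamma$ is a $C^1$ curve with $\tilde\gamma(0)=\varphi_h(x)$ and $\tilde\gamma(t)=\varphi_h(y)$, hence $\tilde\gamma\in\Gamma_t(\varphi_h(x),\varphi_h(y))$, and $\gamma\mapsto\tilde\gamma$ is a bijection whose inverse is induced by $\varphi_{h^{-1}}$.

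The core step is the pointwise identity for the integrand. By the chain rule together with the pushforward in \eqref{eq:c3eq6}, the velocity transforms as $\dot{\tilde\gamma}(s)=(\varphi_h)_*\dot\gamma(s)$ for each $s$. Invoking the $G$-invariance of the metric tensor from Definition~\ref{inv_met} at the base point $\gamma(s)$ with $v=w=\dot\gamma(s)$ then yields
\begin{equation*}
\mu|_{\tilde\gamma(s)}\big(\dot{\tilde\gamma}(s),\dot{\tilde\gamma}(s)\big)=\mu|_{\varphi_h(\gamma(s))}\big((\varphi_h)_*\dot\gamma(s),(\varphi_h)_*\dot\gamma(s)\big)=\mu|_{\gamma(s)}\big(\dot\gamma(s),\dot\gamma(s)\big).
\end{equation*}
Integrating over $[0,t]$ shows that $\tilde\gamma$ and $\gamma$ have the same length. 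Taking the infimum over $\gamma\in\Gamma_t(x,y)$ and using the bijection above gives $d_\mu(\varphi_h(x),\varphi_h(y))=d_\mu(x,y)$, which is \eqref{eq:eq10}.

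I expect the main obstacle to be the bookkeeping around the compact pushforward definition \eqref{eq:c3eq6}: one must verify carefully that the velocity of the image curve is genuinely $(\varphi_h)_*\dot\gamma$ and that Definition~\ref{inv_met} is applied in the correct direction, so that the factors introduced by the differential of $\varphi_h$ cancel exactly and no residual scaling survives (an isometry, not merely a conformal map). The cut-locus hypothesis plays only an auxiliary role here: away from the cut locus of $\varphi_h(x)$ the infimum defining $d_\mu$ is attained by a unique minimizing geodesic, so one may equivalently phrase the conclusion as ``$\varphi_h$ carries the minimizing geodesic from $x$ to $y$ onto the minimizing geodesic from $\varphi_h(x)$ to $\varphi_h(y)$,'' which is the regularity regime relevant to the later Hopf--Lax--Oleinik computations; the length-preserving-bijection argument itself, however, delivers \eqref{eq:eq10} without further appeal to that hypothesis.
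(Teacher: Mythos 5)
Your proof is correct and follows essentially the same route as the paper's: both establish that $\gamma\mapsto\varphi_h\circ\gamma$ is a bijection between $\Gamma_t(x,y)$ and $\Gamma_t(\varphi_h(x),\varphi_h(y))$, identify the image velocity with $(\varphi_h)_*\dot\gamma$, and invoke the $G$-invariance of the metric tensor (Definition~\ref{inv_met}) to equate the integrands before passing to the infimum. Your version is in fact slightly cleaner in applying the invariance identity at the base point $\gamma(s)$ in the intended direction, and your closing observation that the cut-locus hypothesis is not needed for the length-preserving-bijection argument itself agrees with the paper's own remark.
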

\begin{proof}
Let us perform a left multiplication by $h$ in one direction and by $h^{-1}$ in the other direction. A bijection can then be established between $C^1$ curves connecting $x$ to $y$ and connecting $\varphi_h(x)$ to $\varphi_h(y)$. Thus, we have:
\begin{align*}
d_\mu\big( \varphi_h(x),\varphi_h(y)\big) &= \inf_{{\beta ~\in ~\Gamma_{t}(\varphi_h(x), \varphi_h(y))}}\int_{0}^{t}\sqrt{\mu|_{\beta(t)}(\dot{\beta}(t),\dot{\beta}(t))}dt,\\
&= \inf_{{h\gamma ~\in ~\Gamma_{t}(\varphi_h(x),\varphi_h(y))}}\int_{0}^{t}\sqrt{\mu|_{h\gamma(t)}\Big(\varphi_h(\dot{\gamma}(t)), \varphi_h(\dot{\gamma}(t))\Big)}dt ~~~~~\\\text{with} ~~ \gamma \in \Gamma_{t}(\varphi_h(x),\varphi_h(y))\\
&= \inf_{{h\gamma ~\in ~\Gamma_{t}(\varphi_h(x),\varphi_h(y))}}\int_{0}^{t}\sqrt{\mu|_{h\gamma(t)}\Big((\varphi_h)_{*}\dot{\gamma}(t),(\varphi_h)_{*}\dot{\gamma}(t)\Big)}dt\\
&= \inf_{{h\gamma ~\in ~\Gamma_{t}(\varphi_h(x),\varphi_h(y))}}\int_{0}^{t}\sqrt{\mu|_{\gamma(t)}(\dot{\gamma}(t),\dot{\gamma}(t))}dt ~~~~~~~~\text{by \eqref{eq:c3eq9}}\\
&= \inf_{{\gamma ~\in ~\Gamma_{t}(x,y)}}\int_{0}^{t}\sqrt{\mu|_{\gamma(t)}\big(\dot{\gamma}(t),\dot{\gamma}(t)\big)}dt = d_\mu(x,y)  \qed
\end{align*}
\end{proof}
\begin{remark}
Staying away from the cut locus provides a unique distance in Definition~\ref{dist}. Also, thanks to Proposition~\ref{inv_geod}, $d_\mu$ shares the same symmetries, since we derive it from a tensor field invariant under $G$.
\end{remark}

\section{Group morphological convolutions and PDEs}\label{sect4}
Let $(M,\mu)$ be a compact and connected Riemannian manifold endowed with a metric $\mu$, and $f,b: (M,\mu)\longrightarrow \mathbb{R}$.
\begin{definition}
\label{def:conv_gp}
The group morphological convolution $\diamondsuit$ between $b$ and $f$ is defined $\forall~x \in  (M,\mu)$ by: $\displaystyle{b\diamondsuit f(x) = \inf_{p\in G}\{f(\varphi_p(x_0))+b(\varphi_{p^{-1}}(x))\}}$.
\end{definition}
Denote $TM$ the tangent bundle $(M,\mu)$ and $L:TM\to\mathbb{R}$ a Lagrangian function. 
\begin{equation}
h_t(x,y)=\inf_{\substack{\gamma}}\Big\{\int_0^t L(\gamma(s),\gamma^\prime(s))\ud s;  \gamma:[0,t]\to (M,\mu)  \text{ of class } C^1,
\gamma(0)= x,\gamma(t)=y\Big\}.
\end{equation}
\begin{definition}
A function $u:V\to\mathbb{R}$ is a viscosity subsolution of $H(x, d_xu)=c\in \mathbb{R}$ on the open subset $V\subset (M,\mu)$, where $d_xu$ is the differential of $u$ at a point $x\in (M,\mu)$, if for every $\mathcal C^1$ function $\varphi:V\to\mathbb{R}$ with $\varphi\geq u$ everywhere, and at every point $x_0\in V$ where $u(x_0)=\varphi(x_0)$, one has $H(x_0,d_{x_0}\varphi)\leq c$.\\
A function $u:V\to\mathbb{R}$ is a viscosity supersolution of $H(x,d_xu)=c$ on the open subset $V\subset (M,\mu)$, if for every $\mathcal C^1$ function $\varphi:V\to\mathbb{R}$, with $u\geq\varphi$ everywhere, and at every point $y_0\in V$ where $u(y_0)=\varphi(y_0)$, one has: $H(y_0,d_{y_0}\varphi)\geq c$.\\
A function $u:V\to\mathbb{R}$ is a viscosity solution of $H(x,d_xu)=c$ on the open subset $V\subset (M,\mu)$, if it is both a viscosity subsolution and a viscosity supersolution.
\end{definition}
Let $H:T^\ast\!M\to\mathbb R$ be the Hamiltonian associated to the Lagrangian $L$, $H$ is defined on the cotangent bundle $T^\ast\!M$ of $(M,\mu)$, $H(x,q)=\underset{v\in T_x\!M}\sup\{q(v)-L(x,v)\}$. The first-order Hamilton-Jacobi PDE \eqref{pde_hamil} can be extended in Riemannian manifolds as follows: $$\partial_t w + H\left(x,\nabla w\right) = 0 \text{ in } (M,\mu) \times (0,+\infty);~w(\cdot,0) = f \text{ on } (M,\mu).$$ 
\begin{definition}
$L$ is a Tonelli Lagrangian if the above conditions are fulfilled:
\begin{enumerate}
\item $L: TM\to\mathbb{R}$ is of class $C^2$, at least.
\item $L$ is superlinear above compact subset of $M$; \ie $\underset{\|p\|\to\infty}\lim\dfrac{L(p)}{\|p\|}= +\infty$, $\|\cdotp\|$ being a norm induced by a Riemannian metric on $M$.
\item For each $(x,v)\in TM$, $\dfrac{\partial^2L}{\partial^2v}(x,v)$ is positive definite as a quadratic form.
\end{enumerate}
\end{definition}

\begin{theorem}[\cite{Fathi2008}]
\label{th:c3th11}
Let $L : TM\rightarrow \mathbb{R}$ be a Tonelli Lagrangian. If $f \in C^0(M,\R)$, then the function $w : M\times[0;+\infty]\rightarrow \R$ defined by:
\begin{equation}
w(x,t) = \inf_{y\in M}\left\{f(y) + h_t(x, y)\right\}
\end{equation}
is a viscosity solution of the equation:
\begin{equation}
\frac{\partial w}{\partial t} + H(x, \nabla w) = 0 \text{ in } M\times (0,+\infty); w(\cdotp,0)=f \text{ on } M,
\end{equation}
with $H$ being the Hamiltonian associated with $L$.
\end{theorem}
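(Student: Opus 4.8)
The plan is to follow the weak KAM / dynamic programming approach, whose backbone is the semigroup (dynamic programming) principle for $h_t$ and for $w$. First I would establish that the minimal action satisfies $h_{t+s}(x,y) = \inf_{z\in M}\{h_t(x,z) + h_s(z,y)\}$ for $s,t>0$: the inequality $\leq$ follows by concatenating an almost-minimizing curve from $x$ to $z$ on $[0,t]$ with one from $z$ to $y$ on $[t,t+s]$ and using additivity of the action integral, while $\geq$ follows by restricting a minimizer for $h_{t+s}(x,y)$ to the two subintervals and setting $z=\gamma(t)$. Existence of these minimizers is guaranteed by Tonelli's theorem, which applies since $L$ is a Tonelli Lagrangian and $(M,\mu)$ is compact and connected. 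From this identity and the definition of $w$ one obtains the dynamic programming principle
\begin{equation}
w(x,t) = \inf_{z\in M}\{\, h_s(x,z) + w(z,t-s)\,\},\qquad 0<s<t, \label{eq:dpp}
\end{equation}
which I will use for both the sub- and supersolution tests.

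Before the viscosity tests I would record the analytic preliminaries. Tonelli's theory yields $C^2$ minimizing extremals for each $h_t(x,y)$ and, crucially, a priori bounds: over short time intervals the velocities of these minimizers remain bounded, so that an exact minimizer $\gamma_h$ for $h_h(x_0,\cdot)$ with $\gamma_h(t_0)=x_0$ satisfies $\gamma_h(s)\to x_0$ as $h\to 0$, uniformly on $[t_0-h,t_0]$. These bounds rest on superlinearity of $L$ together with compactness of $M$. The same superlinearity yields a lower bound of the form $h_t(x,y)\ge \tfrac{1}{t}\,\theta(d_\mu(x,y)) - Ct$ with $\theta$ superlinear, which forces $h_t(x,y)\to +\infty$ for $y\neq x$ and $h_t(x,x)\to 0$ as $t\to 0^+$; combined with continuity of $f$ this gives the initial condition $w(\cdot,0)=f$ and continuity of $w$ up to $t=0$. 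I interpret the evolution equation in the viscosity sense of the stated definition applied to the space-time Hamiltonian $(x,t,a,q)\mapsto a + H(x,q)$, so that a $C^1$ test function $\varphi$ contributes $\partial_t\varphi + H(x,d_x\varphi)$.

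For the subsolution inequality, let $\varphi\in C^1$ touch $w$ from above at a point $(x_0,t_0)$ with $t_0>0$, \ie $\varphi\ge w$ and $\varphi(x_0,t_0)=w(x_0,t_0)$. Fix an arbitrary $v\in T_{x_0}M$ and a curve $\gamma$ with $\gamma(t_0)=x_0$ and $\dot\gamma(t_0)=v$. Applying \eqref{eq:dpp} with $s=h$ and the competitor $z=\gamma(t_0-h)$ gives $w(x_0,t_0)\le \int_{t_0-h}^{t_0}L(\gamma,\dot\gamma)\,\ud s + w(\gamma(t_0-h),t_0-h)$. Replacing $w$ by $\varphi$ on the left (equality) and on the right (since $\varphi\ge w$) yields $\varphi(x_0,t_0)-\varphi(\gamma(t_0-h),t_0-h)\le \int_{t_0-h}^{t_0}L(\gamma,\dot\gamma)\,\ud s$. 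Dividing by $h$ and letting $h\to 0$ produces $\partial_t\varphi(x_0,t_0)+d_{x_0}\varphi(v)\le L(x_0,v)$; since $v$ is arbitrary, taking the supremum over $v$ and using $H(x_0,q)=\sup_v\{q(v)-L(x_0,v)\}$ gives $\partial_t\varphi+H(x_0,d_{x_0}\varphi)\le 0$.

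For the supersolution inequality, let $\psi\in C^1$ touch $w$ from below at $(x_0,t_0)$, $t_0>0$. Using \eqref{eq:dpp} with $s=h$ and an \emph{exact} minimizer $\gamma_h$ realizing the infimum, one has $w(x_0,t_0)=\int_{t_0-h}^{t_0}L(\gamma_h,\dot\gamma_h)\,\ud s + w(\gamma_h(t_0-h),t_0-h)$. Using $\psi\le w$ with equality at $(x_0,t_0)$ and writing the left-hand difference as $\int_{t_0-h}^{t_0}\tfrac{\ud}{\ud s}\psi(\gamma_h(s),s)\,\ud s$ leads to $\int_{t_0-h}^{t_0}\big[\partial_t\psi + d\psi(\dot\gamma_h) - L(\gamma_h,\dot\gamma_h)\big]\,\ud s\ge 0$. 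The Fenchel inequality $d\psi(\dot\gamma_h)-L(\gamma_h,\dot\gamma_h)\le H(\gamma_h,d\psi)$ then gives $\int_{t_0-h}^{t_0}[\partial_t\psi+H(\gamma_h,d\psi)]\,\ud s\ge 0$; dividing by $h$ and letting $h\to 0$, using $\gamma_h(s)\to x_0$ and continuity of $H$, $\partial_t\psi$ and $d\psi$, yields $\partial_t\psi(x_0,t_0)+H(x_0,d_{x_0}\psi)\ge 0$. Being both a sub- and a supersolution with the correct initial data, $w$ is the viscosity solution. \emph{The main obstacle} I anticipate is precisely the a priori control of the minimizing velocities as $h\to 0$, ensuring $\gamma_h(s)\to x_0$ and justifying the passage to the limit in the supersolution step: this is where Tonelli superlinearity and compactness of $M$ are essential, and it is the technical heart underpinning the otherwise formal computations.
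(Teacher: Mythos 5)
The paper does not actually prove this theorem: it is imported verbatim from Fathi's weak KAM notes \cite{Fathi2008}, so there is no in-paper proof to compare you against. Your argument is the standard dynamic-programming proof of the cited result and is correct in structure: the semigroup identity for $h_t$, the derived programming principle for $w$, the subsolution test via arbitrary $C^1$ competitor curves together with the Legendre--Fenchel definition of $H$, and the supersolution test via Tonelli minimizers plus the Fenchel inequality are exactly the ingredients of Fathi's proof, and you correctly isolate the genuine technical point (a priori velocity bounds for short-time minimizers on a compact manifold, and the behaviour $h_t(x,y)\to+\infty$ off the diagonal as $t\to 0^+$ needed for the initial condition). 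One detail to reconcile with the paper's conventions: here $h_t(x,y)$ is defined over curves with $\gamma(0)=x$ and $\gamma(t)=y$, whereas your test curves are parametrized to \emph{end} at $x_0$, so your estimate of $h_h(x_0,z)$ implicitly uses the time-reversed curve; for a general Tonelli Lagrangian this yields $\partial_t\varphi+H(x_0,-d_{x_0}\varphi)\le 0$ rather than the stated inequality. You should either swap the arguments of $h_t$ (Fathi's orientation) or remark that the Lagrangians relevant to the paper are even in $v$, so that $h_t$ is symmetric and the discrepancy disappears.
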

By reversing the time, the viscosity solution of the PDE:
\begin{equation}
\frac{\partial w}{\partial t} - H(x, \nabla w) = 0 \text{ in } M\times (0,+\infty); w(\cdotp,0)=f \text{ on } M,
\end{equation}
is given by: 
\begin{equation}
w(x,t) = \sup_{y\in M}\left\{f(y) - h_t(x, y)\right\}.
\end{equation}
Riemannian multiscale operations can be performed by choosing a specific Hamiltonian, respectively, $H=\lVert\nabla_{\mu} w\rVert^k_{\mu}$ for the multiscale dilations and \\$H=-\lVert\nabla_{\mu} w\rVert^k_{\mu}$ for multiscale erosions. Doing so links mathematical morphology to first order Hamilton-Jacobi PDEs, and taking $k>1$ allows to deal with more general structuring functions than the quadratic ones. 

\begin{proposition}
\label{prop:conv}
Let $f \in C^0( (M,\mu), \mathbb{R})$ a continuous function and let \\$c_k = \frac{k-1}{k^{\frac{k}{k-1}}}$, $k>1$. Viscosity solutions of the Cauchy problem
\begin{equation}
\label{eq:c3eq28}
\dfrac{\partial w}{\partial t} + \|\nabla_{\mu} w\|^k_{\mu} = 0 \text{ in }  (M,\mu) \times (0;~\infty);~w(\cdotp,0) = f \text{ on } (M,\mu),
\end{equation} 
are given by: $\displaystyle{f_t(x) = b_{t}^{k} \diamondsuit f(x) := \inf_{h \in G} \left\{ f\big(\varphi_h(x_0)\big) + c_k  \frac{d_\mu\big(\varphi_{h^{-1}}(x),x_0\big)^\frac{k}{k-1}}{t^{\frac{1}{k-1}}} \right\}}$, \\where $b^{k}_{t}=c_k \dfrac{d_\mu(x_0,\cdotp)^\frac{k}{k-1}}{t^{\frac{1}{k-1}}}$ are the multiscale structuring functions. 
\end{proposition}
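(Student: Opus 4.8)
The plan is to specialize Fathi's viscosity-solution formula (Theorem~\ref{th:c3th11}) to the Hamiltonian $H(x,q)=\|q\|_\mu^k$ and then rewrite the resulting minimization over $M$ as a minimization over the group $G$, using transitivity and the distance invariance of Proposition~\ref{inv_geod}.

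First I would identify the Lagrangian $L$ dual to $H(x,q)=\|q\|_\mu^k$. Computing the Legendre--Fenchel transform $L(x,v)=\sup_{q}\{q(v)-\|q\|_\mu^k\}$ fibrewise, the supremum is attained for $q$ colinear with $v$, and a one-variable optimization gives $L(x,v)=c_k\,\|v\|_\mu^{k/(k-1)}$ with exactly the stated constant $c_k=(k-1)/k^{k/(k-1)}$. I would then check that $L$ has the regularity needed for Theorem~\ref{th:c3th11}: superlinearity holds because $k/(k-1)>1$, and convexity with positive-definite velocity Hessian holds away from $v=0$. The delicate point is $C^2$-regularity at $v=0$, which genuinely fails only when the exponent $k/(k-1)<2$, \ie $k>2$; there I would either restrict to the smooth regime or invoke a standard mollification/approximation of $L$ together with stability of viscosity solutions.

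Next I would compute the minimal action $h_t(x,y)$ for this $L$. Since $L$ depends on the velocity only through its $\mu$-norm, I would bound $\int_0^t c_k\|\gamma'\|_\mu^{k/(k-1)}\,ds$ from below by combining Jensen's inequality with length $\ge$ distance: from $\tfrac1t\int_0^t\|\gamma'\|_\mu\,ds\ge d_\mu(x,y)/t$ and convexity of $r\mapsto r^{k/(k-1)}$ one gets $h_t(x,y)\ge c_k\,d_\mu(x,y)^{k/(k-1)}/t^{1/(k-1)}$, with equality realized by the constant-speed minimizing geodesic from $x$ to $y$ (existence uses compactness of $(M,\mu)$). This value is precisely $b_t^k$ evaluated along the geodesic, so Theorem~\ref{th:c3th11} yields $w(x,t)=\inf_{y\in M}\{f(y)+c_k\,d_\mu(x,y)^{k/(k-1)}/t^{1/(k-1)}\}$.

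Finally I would convert the infimum over $M$ into an infimum over $G$. Because $G$ acts transitively, the projection $\pi(h)=\varphi_h(x_0)$ is onto $M$, so the substitution $y=\varphi_h(x_0)$ reindexes the infimum without changing its value. Applying $\varphi_{h^{-1}}$ to both arguments of the distance and using Proposition~\ref{inv_geod}, together with $\varphi_{h^{-1}}(\varphi_h(x_0))=\varphi_e(x_0)=x_0$, gives $d_\mu(x,\varphi_h(x_0))=d_\mu(\varphi_{h^{-1}}(x),x_0)$; this turns the formula into $\inf_{h\in G}\{f(\varphi_h(x_0))+c_k\,d_\mu(\varphi_{h^{-1}}(x),x_0)^{k/(k-1)}/t^{1/(k-1)}\}=b_t^k\diamondsuit f(x)$, as claimed. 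I expect the main obstacle to be the action computation combined with the Tonelli-regularity check at zero velocity; the group-theoretic rewriting is routine modulo the cut-locus caveat already flagged in the remark following Proposition~\ref{inv_geod}.
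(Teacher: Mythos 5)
Your proposal is correct and follows the same skeleton as the paper's proof: first obtain the Hopf--Lax--Oleinik representation $w(x,t)=\inf_{y\in M}\{f(y)+c_k\,d_\mu(x,y)^{k/(k-1)}/t^{1/(k-1)}\}$, then reindex the infimum over $G$ via the projection $\pi(h)=\varphi_h(x_0)$ (transitivity makes $\pi$ surjective) and the $G$-invariance of $d_\mu$ from Proposition~\ref{inv_geod}. The second half of your argument is essentially identical to the paper's. The difference is in the first half: the paper simply cites the Riemannian HLO formula from the reference \cite{Diop2021} as a black box, whereas you derive it from Theorem~\ref{th:c3th11} by computing the Legendre--Fenchel dual Lagrangian $L(x,v)=c_k\|v\|_\mu^{k/(k-1)}$ (your one-variable optimization does yield exactly $c_k=(k-1)/k^{k/(k-1)}$) and evaluating the minimal action $h_t$ via Jensen's inequality plus length~$\geq$~distance, with equality on the constant-speed geodesic. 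This buys a self-contained proof at the cost of having to confront the Tonelli hypotheses, which you rightly flag as the delicate point; note only that the defect at $v=0$ is slightly different from what you state: for $k>2$ (exponent $<2$) the $C^2$ regularity fails, while for $1<k<2$ (exponent $>2$) the Lagrangian is $C^2$ but its velocity Hessian degenerates at the origin, so strictly speaking an approximation/stability argument is needed for every $k\neq 2$. Both proofs share the same unaddressed caveat about the cut locus in the invariance step, which you at least mention explicitly.
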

\begin{proof}
Viscosity solutions of the PDE \eqref{eq:c3eq28} are given by HLO formulas \cite{Diop2021}: $$\displaystyle{
f_t(x)= \inf_{y \in  M} \left\{ f(y) + c_k  \frac{d_\mu(x,y)^\frac{k}{k-1}}{t^{\frac{1}{k-1}}} \right\}}.$$ The projection $\pi$ \eqref{eq:c3eq4} is defined by associating any $h\in G$ to an element $x \in  (M,\mu)$. Then, using the definition and accounting the invariance property in Proposition~\ref{inv_geod}, one gets:
\begin{align*}
f_t(x) &= \inf_{h\in G} \left\{ f\big(\varphi_h(x_0)\big) + c_k\dfrac{d_\mu(x, \varphi_h(x_0))^\frac{k}{k-1}}{t^\frac{1}{k-1}}\right\}\\
&= \inf_{h\in G} \left\{ f\big(\varphi_h(x_0)\big) + c_k\frac{d_\mu\big(\varphi_{h^{-1}}(x), x_0\big)^\frac{k}{k-1}}{t^\frac{1}{k-1}}\right\} \\
&=\inf_{h\in G}\left\{ f\big(\varphi_h(x_0)\big)+ b_{t}^k\big(\varphi_{h^{-1}}(x)\big)\right\} = b_{t}^k \diamondsuit f(x). \qed
\end{align*}
\end{proof}
By reversing the time, we can prove that the viscosity solutions of the Cauchy problem corresponding to multiscale dilations:
\begin{equation}
\dfrac{\partial w}{\partial t} - \|\nabla_{\mu} w\|^k_{\mu} = 0 \text{ in }  (M,\mu) \times (0;~\infty);~w(\cdotp,0) = f \text{ on } (M,\mu)
\end{equation} 
are given by \cite{Diop2021}: $$\displaystyle{f^t(x) = \sup_{x \in  (M,\mu)} \left\{ f(y) - C_k  \frac{d_\mu(x, y)^\frac{k}{k-1}}{t^{\frac{1}{k-1}}} \right\}},$$ and thus, using the same arguments as in the preceding proof, one has: $$f_t(x) = -(b_{t}^{k} \diamondsuit (-f))(x).$$
\begin{proposition}
Let $k>1$. Then, $\forall~t,s \geq 0$, the family of structuring functions $b_{t}^{k}$ satisfy the following semigroup property: $
b_{t+s}^{k} = b_{t}^{k} \diamondsuit  b_{s}^{k}$.
\end{proposition}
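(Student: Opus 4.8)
The plan is to unfold both sides using Definition~\ref{def:conv_gp} and the explicit form $b^{k}_{t}=c_k\, d_\mu(x_0,\cdot)^{\frac{k}{k-1}}/t^{\frac{1}{k-1}}$, and then to reduce the infimum over the group $G$ to an infimum over the manifold $M$. Writing out the group morphological convolution with structuring function $b_t^k$ and ``signal'' $b_s^k$ gives
\[
(b_{t}^{k}\diamondsuit b_{s}^{k})(x)=\inf_{p\in G}\Big\{ b_{s}^{k}\big(\varphi_{p}(x_0)\big)+b_{t}^{k}\big(\varphi_{p^{-1}}(x)\big)\Big\}.
\]
First I would substitute the closed form of $b_t^k,b_s^k$, set $y=\varphi_p(x_0)$, and use the $G$-invariance of $d_\mu$ (Proposition~\ref{inv_geod}) in the form $d_\mu\big(x_0,\varphi_{p^{-1}}(x)\big)=d_\mu\big(\varphi_p(x_0),x\big)=d_\mu(y,x)$. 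Since $G$ acts transitively on $(M,\mu)$, the point $y=\varphi_p(x_0)$ ranges over all of $M$ as $p$ ranges over $G$, so the group infimum collapses to
\[
(b_{t}^{k}\diamondsuit b_{s}^{k})(x)=\inf_{y\in M}\left\{ c_k\frac{d_\mu(x_0,y)^{\frac{k}{k-1}}}{s^{\frac{1}{k-1}}}+c_k\frac{d_\mu(y,x)^{\frac{k}{k-1}}}{t^{\frac{1}{k-1}}}\right\}.
\]

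Next I would isolate the purely scalar optimization problem that governs the result. Writing $\alpha=\frac{k}{k-1}$, $\beta=\frac{1}{k-1}$ (so that $\alpha-1=\beta$ and $\alpha-\beta=1$) and $D=d_\mu(x_0,x)$, the key identity to establish is
\[
\inf_{\substack{a,b\ge 0\\ a+b=D}}\left\{ c_k\frac{a^{\frac{k}{k-1}}}{s^{\frac{1}{k-1}}}+c_k\frac{b^{\frac{k}{k-1}}}{t^{\frac{1}{k-1}}}\right\}= c_k\frac{D^{\frac{k}{k-1}}}{(s+t)^{\frac{1}{k-1}}},
\]
with the minimizer at $a=\frac{s}{s+t}D$, $b=\frac{t}{s+t}D$; this is a one-line Lagrange-multiplier computation using $\alpha-1=\beta$ and $\alpha-\beta=1$, and it is exactly $b_{t+s}^{k}(x)$.

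I would then obtain the two inequalities separately. For $\ge$: for any $y$, set $a=d_\mu(x_0,y)$, $b=d_\mu(y,x)$; the triangle inequality gives $a+b\ge D$, and since $r\mapsto c_k r^{\alpha}$ is increasing on $[0,\infty)$ each summand only decreases when $a,b$ are reduced to meet $a+b=D$, so the bracket is bounded below by the scalar minimum $b_{t+s}^k(x)$. For $\le$: invoking that $(M,\mu)$ is compact and connected (hence complete), Hopf--Rinow supplies a minimizing geodesic from $x_0$ to $x$; choosing $y$ at arclength fraction $\frac{s}{s+t}$ along it yields $d_\mu(x_0,y)=\frac{s}{s+t}D$ and $d_\mu(y,x)=\frac{t}{s+t}D$ with $a+b=D$ exactly, so the infimum attains $b_{t+s}^k(x)$. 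Combining gives $b_{t+s}^{k}=b_{t}^{k}\diamondsuit b_{s}^{k}$ for $t,s>0$, and the degenerate cases $t=0$ or $s=0$ follow from the convention that $b_0^k$ is the identity element for $\diamondsuit$.

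The main obstacle I anticipate is the $\le$ direction: it is not enough to optimize the scalar problem, one must certify that the optimal split $(a,b)$ is \emph{geometrically realizable} by an actual intermediate point $y\in M$, which is precisely where completeness and the existence of minimizing geodesics enter. A secondary technical point is that the invariance $d_\mu(x_0,\varphi_{p^{-1}}(x))=d_\mu(\varphi_p(x_0),x)$ used in the reduction carries the cut-locus caveat of Proposition~\ref{inv_geod}; since the whole argument is phrased through minimizing geodesics this is harmless, but it should be stated explicitly.
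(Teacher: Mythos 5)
Your proof is correct, but it takes a genuinely different route from the paper's. The paper does not compute the kernel $b_t^k\diamondsuit b_s^k$ directly at all: it tests both sides against an arbitrary signal $f$, writes $b_{t+s}^{k}\diamondsuit f(x)=\inf_{p\in G}\{f(\varphi_p(x_0))+b_{t+s}^{k}(\varphi_{p^{-1}}(x))\}$, and then invokes Theorem~2.1-(ii) of the cited work of Balogh et al.\ (the semigroup property of the Hopf--Lax operators $f\mapsto b_t^k\diamondsuit f$ on geodesic spaces) to rewrite this as $\inf_{h\in G}\{(b_{s}^{k}\diamondsuit f)(\varphi_h(x_0))+b_{t}^{k}(\varphi_{h^{-1}}(x))\}=(b_{t}^{k}\diamondsuit b_{s}^{k})\diamondsuit f(x)$; the last step tacitly uses associativity of $\diamondsuit$, and strictly speaking the paper proves the operator identity $b_{t+s}^{k}\diamondsuit{}=(b_{t}^{k}\diamondsuit b_{s}^{k})\diamondsuit{}$ rather than the kernel identity in the statement. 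Your argument, by contrast, is self-contained and proves the stated identity directly: the reduction of the group infimum to $\inf_{y\in M}$ via transitivity and $G$-invariance of $d_\mu$ is exactly the mechanism the paper uses elsewhere (Proposition~\ref{prop:conv}), your scalar minimization with minimizer $a=\frac{s}{s+t}D$, $b=\frac{t}{s+t}D$ checks out (using $\alpha-1=\beta$ and $\alpha-\beta=1$), the triangle inequality gives $\ge$, and Hopf--Rinow on the compact manifold certifies realizability of the optimal split for $\le$ --- which is precisely the geometric content that the citation to Balogh et al.\ is outsourcing. What your approach buys is transparency and independence from the external theorem (whose hypotheses on the metric space one would otherwise have to verify); what the paper's approach buys is brevity and a statement at the level of the Hopf--Lax semigroup, which is what is actually used downstream. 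Two minor points you already flag and should keep: the cut-locus caveat in Proposition~\ref{inv_geod} is harmless once everything is phrased through minimizing geodesics, and the cases $t=0$ or $s=0$ require a convention for $b_0^k$ (the formula degenerates), e.g.\ $b_0^k(x_0)=0$ and $+\infty$ elsewhere, which is the neutral element for $\diamondsuit$.
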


\begin{proof}
Indeed, one has: 
\[
b_{t+s}^{k}\diamondsuit f(x)= \inf_{p\in G}\{f(\varphi_p(x_0))+b_{t+s}^{k}(\varphi_{p^{-1}}(x))\}.
\]
Then, using  Theorem~2.1-(ii) in \cite{Balogh2012}, one gets:
\begin{align*}
b_{t+s}^{k}\diamondsuit f(x) &= \inf_{h\in G}\{b_{s}^{k}\diamondsuit f(\varphi_h(x_0))+b_{t}^{k}(\varphi_{h^{-1}}(x)) \} \\
&= (b_{t}^{k}\diamondsuit b_{s}^{k})\diamondsuit f(x). \qed
\end{align*} 
\end{proof}

\section{Morphological equivariant PDEs for generative models}\label{sect5}

We aim at proposing generative models for images that are based on PDEs satisfying an equivariance property. Our approach is resumed in two major steps: {\bf i)} designing morphological PDEs as an alternative for traditional CNNs that preserve an equivariant processing in composing feature maps in layers, and {\bf ii)} proposing a generative model based on this structure.

\subsection{Morphological PDE-based layers}
Feature maps are carried out in traditional CNNs throughout the classical convolution, pooling and ReLU activation functions. Our goal is to propose PDEs that behave like traditional CNNs, in one hand, and preserve an equivariance property, on the other hand. For that purpose, PDEs will be formulated on group transformations to ensure equivariance and make PDEs consistent with G-CNNs \cite{Cohen2016,Bekkers2018,Cohen2019}. Equivariance is a robust way to incorporate desired and essential symmetries into the network so that there is no more need to learn such symmetries; consequently, the amount of data is reduced. Viewing layers as image processing operators allows us use well elaborated image analysis and processing techniques to design the network. Thin image analysis is needed to achieve our objective. Due to its nonlinearity aspects, good shape and geometry description capabilities, mathematical morphology appeared as an efficient and powerful tool for multiscale image and data analysis \cite{Shih2009}. For a better analysis of geometrical image structures, it is also interesting to consider works from geometric image analysis \cite{Welk2019a,Fadili2015a,DubrovinaKarni2014,Burger2016,Duits2007a}. Image and data analysis and processing methods based on non-Euclidean metrics; for instance, Riemannian metrics, are well known to improve a lot Euclidean based approaches. Riemannian manifolds are proved to behave very well for capturing thin data structures, providing then better representations and analysis of geometrical structures present in the data. This fact is shown in many image processing studies with real life applications; for instance, in video surveillance, shape and surface analysis, human body and face analysis, image segmentation \cite{Su2014,Balan2015,Citti2016,Kurtek2016,Younes2019,Pierson2022a}. For these reasons, we choose homogeneous spaces to avoid Euclidean metrics so that the network is provided with image processing capabilities for a better handling of geometric thin structures \cite{Citti2006,Janssen2018,Franceschiello2019,Duits2019,Diop2021}. Doing so should make feature maps richer, and combined with the equivariance property of the morphological PDEs will provide neat improvements of classical GANs in terms of quality of the content generation. Morphological PDEs are thus used to replace the pooling operations and ReLU activation functions in the proposed generative model.

\subsection{PDE model design}

Let $(M,\mu)$ be a compact and connected Riemannian manifold, $f: (M,\mu)\longrightarrow \mathbb{R}$.\\

PDE-G-CNNs were formally introduced in homogeneous spaces with $G$-invariance metric tensor fields on quotient spaces \cite{Smets13July2022}. Built on the primary approach, the proposed model is based on a combination of traditional CNNs and morphological PDE layers of Hamilton-Jacobi type in Riemannian manifolds, and is composed of the following PDEs: \\
$\bullet $ Convection: $$\dfrac{\partial w}{\partial t} + \alpha w = 0 \text{ in } (\mathcal{M},\mu)\times (0,~\infty);~w(\cdot, 0) = f \text{  on } (\mathcal{M},\mu).$$
$\bullet $ Diffusion: $$\dfrac{\partial w}{\partial t} + (-\Delta_\mu)^{k/2} w = 0 \text{ in } (\mathcal{M},\mu)\times (0,~\infty);~ w(\cdot, 0) = f \text{ on } (\mathcal{M},\mu).$$
$\bullet $ Morphological multiscale erosions and dilations for ($+$) and ($+$) sign:
\begin{equation}
\label{ero_dil}
\dfrac{\partial w}{\partial t} \pm \|\nabla_\mu w\|^{{k}}_\mu = 0 \text{ in } (\mathcal{M},\mu)\times (0,~\infty);~w(\cdot, 0) = f \text{ on } (\mathcal{M},\mu),
\end{equation} 
where \(\alpha\) a is vector field invariant under \(G\) on $(\mathcal{M},\mu)$, \(\Delta_\mu\) represents the Laplace-Beltrami operator, \(\Vert \cdot \Vert_\mu\) the norm induced by the Riemannian metric $\mu$ and \(k>1\). The above system of PDEs consitutes the PDE model solved in a step basis using the operator splitting method, where each step corresponds to one of the PDEs. In this work, we only use the morphological multiscale operations steps \eqref{ero_dil}, the convection and diffusion terms are left for future work. Thus, our PDE layers are defined by the following PDEs:
\begin{equation*}
     \label{eq:c3eq13}
\begin{cases}
        \dfrac{\partial w}{\partial t} \pm \|\nabla_{\mu} w\|^{{k}}_{\mu} = 0  & \text{ on } (\mathcal{M},\mu)\times (0,\infty)\\
         w(\cdot, 0) = f & \text{ in } (\mathcal{M},\mu).
\end{cases}
\end{equation*}
PDEs \eqref{ero_dil} introduce nonlinearities into the generator network of the GM-GAN using morphological convolutions, which are obtained a viscosity sense and given respectively for multiscale dilations and erosions thanks to Proposition~\ref{prop:conv}, by: $$f_t(x) = b_{t}^{k} \diamondsuit f(x) \text{ and } f^t(x) = -(b_{t}^{k} \diamondsuit (-f))(x),$$ where $b^{k}_{t}=c_k \dfrac{d_\mu(x_0,\cdot)^{\frac{k}{k-1}}}{t^{\frac{1}{k-1}}}$.
Next, we show that layers introduce nonlinearities in traditional CNNs, max pooling and ReLUs can be seen as morphological convolutions:
\begin{proposition}
Let $f \in C^{\infty}((\mathcal{M},\mu))$ and $B \subset (\mathcal{M},\mu)$ an non-empty set. Consider the flat structuring function $b: (\mathcal{M},\mu) \rightarrow \mathbb{R} \cup\{\infty\}$. Then, one has: $$\displaystyle{-\left(b \diamondsuit (-f)\right)(x)=\underset{\varphi_{h^{-1}}(x)\in B}{\sup _{h \in G}} f\left(\varphi_{h}(x_0)\right)}.$$
\end{proposition}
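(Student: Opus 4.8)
The plan is to unwind the definition of the group morphological convolution $\diamondsuit$ from Definition~\ref{def:conv_gp} and carry the minus signs through the extended-real arithmetic. First I would substitute $-f$ into the convolution, obtaining
\begin{equation*}
b\diamondsuit(-f)(x) = \inf_{h\in G}\{-f(\varphi_h(x_0)) + b(\varphi_{h^{-1}}(x))\}.
\end{equation*}
Negating and using the elementary identity $-\inf_h g(h) = \sup_h(-g(h))$ then yields
\begin{equation*}
-(b\diamondsuit(-f))(x) = \sup_{h\in G}\{f(\varphi_h(x_0)) - b(\varphi_{h^{-1}}(x))\}.
\end{equation*}

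The next step is to insert the flat structuring function. With the convention that $b$ vanishes on $B$ and equals $+\infty$ on $B^c$ (consistent with the stated codomain $\mathbb{R}\cup\{\infty\}$), the penalty term $-b(\varphi_{h^{-1}}(x))$ equals $0$ whenever $\varphi_{h^{-1}}(x)\in B$ and equals $-\infty$ otherwise. Hence every $h$ with $\varphi_{h^{-1}}(x)\notin B$ contributes a value of $-\infty$ to the supremum and can be discarded, while every admissible $h$ contributes exactly $f(\varphi_h(x_0))$. Restricting the supremum to the admissible set gives the claimed formula
\begin{equation*}
-(b\diamondsuit(-f))(x) = \sup_{\substack{h\in G\\ \varphi_{h^{-1}}(x)\in B}} f(\varphi_h(x_0)).
\end{equation*}

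The only point requiring care is that the restricted supremum is genuinely taken over a non-empty index set, so that discarding the $-\infty$ terms is legitimate and the right-hand side is not vacuously $-\infty$. Here I would invoke the hypothesis that $B$ is non-empty together with the transitivity of the $G$-action on $(\mathcal{M},\mu)$: given any $y\in B$, transitivity supplies $g\in G$ with $\varphi_g(x)=y$, and then $h=g^{-1}$ satisfies $\varphi_{h^{-1}}(x)=y\in B$, so the constraint set $\{h\in G:\varphi_{h^{-1}}(x)\in B\}$ is non-empty for every $x$. This is essentially the whole argument; there is no analytic obstacle since the computation is purely algebraic in the extended reals, and the only subtlety is the bookkeeping of the $\pm\infty$ values together with the non-emptiness check.
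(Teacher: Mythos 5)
Your proposal is correct and follows essentially the same route as the paper: both unwind Definition~\ref{def:conv_gp}, use the convention that the flat structuring function is $0$ on $B$ and $+\infty$ on $B^c$, and discard the infinite terms before (in the paper) or after (in your version) pulling the minus sign through the infimum. Your explicit check via transitivity that the admissible set $\{h\in G:\varphi_{h^{-1}}(x)\in B\}$ is non-empty is a small point the paper leaves implicit, but it does not change the argument.
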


\begin{proof} Using the definition of group convolutions \ref{def:conv_gp}, one gets:
\[
\begin{aligned}
 -\left(b \diamondsuit (-f)\right)(x) 
& =-\inf \left\{ \underset{\varphi_{h^{-1}}(x) \in B}{\inf _{h \in G}}-f\left(\varphi_{h}(x_0)\right), \underset{\varphi_{h^{-1}}(x) \notin B}{\inf _{h \in G}}-f\left(\varphi_{h}(x_0)\right)+\infty  \right\} \\
& =-\underset{\varphi_{h^{-1}}(x) \in B}{\inf _{h \in G}}-f\left(\varphi_{h}(x_0)\right) \\
& =\underset{\varphi_{h^{-1}}(x) \in B}{\sup _{h \in G}} f\left(\varphi_{h}(x_0)\right). \qed
\end{aligned}
\]
\end{proof}
The max pooling of function $f$ with motif $B$ can in fact be seen as a flat morphological dilation with a structurant element $B$. It is truly the case for example for $\R^n$. Indeed, for $f \in C^{0}\left(\mathbb{R}^{n}\right)$ and $B \subset \mathbb{R}^{n}$ a compact set, for every $x \in \mathbb{R}^{n}$, one has: $$\displaystyle{-\left(b \diamondsuit_{\mathbb{R}^{n}}(-f)\right)(x)=\sup _{y \in B} f(x-y)},$$ where the right hand side of the preceding equation is in fact a flat dilation with a structurant element $B$ (see \eqref{inf_sup:st} in Definition~\ref{def:morpho}). 
\begin{proposition}
Let $f\in C_c^0((\mathcal{M},\mu))$. Morphological dilation with the following structuring function: $b(x)= 0$, if $x=x_{0}$; and $b(x)=\underset{x \in \mathcal{M}}{\sup} f(x)$, otherwise, is exactly the same as applying a ReLU to $f$: $$\displaystyle{-\left(b \diamondsuit (-f)\right)(x)=\max \{0, f(x)\}}.$$
\end{proposition}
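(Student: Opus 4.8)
The plan is to unfold the group morphological convolution of Definition~\ref{def:conv_gp} and pass to a supremum, then split the index set $G$ according to whether $\varphi_{h^{-1}}(x)$ hits the base point $x_0$ or not, exactly mirroring the case analysis of the preceding proposition. Concretely, since
$$-\big(b\diamondsuit(-f)\big)(x) = \sup_{h\in G}\big\{ f(\varphi_h(x_0)) - b(\varphi_{h^{-1}}(x))\big\},$$
I would partition the elements $h\in G$ into those with $\varphi_{h^{-1}}(x)=x_0$ and those with $\varphi_{h^{-1}}(x)\neq x_0$, evaluate the two resulting suprema separately, and take their maximum.

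For the first group, the condition $\varphi_{h^{-1}}(x)=x_0$ is equivalent to $\varphi_h(x_0)=x$ (apply $\varphi_h$ to both sides and use $\varphi_h\circ\varphi_{h^{-1}}=\mathrm{id}$), so $b(\varphi_{h^{-1}}(x))=0$ and the corresponding term equals $f(x)$; transitivity of the $G$-action on $(\mathcal{M},\mu)$ guarantees that such an $h$ exists, hence this branch contributes exactly $f(x)$. For the second group, $b(\varphi_{h^{-1}}(x))=\sup_{w\in\mathcal{M}}f(w)=:S$, so every term reads $f(\varphi_h(x_0))-S\le 0$; again by transitivity $\varphi_h(x_0)$ ranges over all of $\mathcal{M}\setminus\{x\}$ as $h$ runs through this group, so the supremum of this branch equals $\big(\sup_{z\neq x}f(z)\big)-S$.

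The one point that needs care, and the only genuinely analytic step, is showing that $\sup_{z\neq x}f(z)=S$, i.e.\ that deleting the single point $x$ does not lower the supremum. This is where the hypotheses $f\in C^0_c$ and that $\mathcal{M}$ is a manifold (hence has no isolated points) enter: whether $S$ is attained at a point other than $x$, attained only at $x$, or merely approached, continuity produces points $z\neq x$ with $f(z)$ arbitrarily close to $S$, so the second branch contributes exactly $0$. Taking the maximum of the two branches then yields $\max\{f(x),0\}$, the claimed ReLU identity. I expect the bookkeeping of the two cases to be routine; the subtlety lies entirely in justifying that the constraint $\varphi_{h^{-1}}(x)\neq x_0$ excises only a single point and therefore leaves the supremum $S$ unchanged.
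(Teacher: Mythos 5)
Your proposal is correct and follows essentially the same route as the paper's proof: unfold the group convolution, split the index set according to whether $\varphi_{h^{-1}}(x)=x_0$, and observe that the second branch contributes $\sup_{z\neq x}f(z)-\sup_{y\in\mathcal{M}}f(y)=0$. You are in fact slightly more careful than the paper on the one nontrivial point, namely justifying via continuity and the absence of isolated points that removing the single point $x$ does not lower the supremum.
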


\begin{proof}
Using $b$ in the definition of morphological group convolution yields:
$$
\begin{aligned}
-\left(b \diamondsuit -f\right)(x)
& =-\inf _{h \in G}\left\{ b\left(\varphi_{h^{-1}}(x)\right)-f\left(\varphi_{h}(x_0)\right)\right\} \\
& =-\inf _{h \in G}\left\{\inf _{\varphi_{h^{-1}}(x)=x_{0}}-f\left(\varphi_{h}(x_0)\right), \inf _{\varphi_{h^{-1}}(x) \neq x_{0}}-f\left(\varphi_{h}(x_0)\right)+\sup _{y \in \mathcal{M}} f(y)\right\} \\
& =\sup \left\{f(x), \underset{z \neq x}{\sup _{z \in \mathcal{M}}} f(z)-\sup _{y \in \mathcal{M}} f(y)\right\},
\end{aligned}
$$
The existence of the supremum of $f$ is guaranteed since $f$ is continuous on a compact support; moreover, one has $\displaystyle{\underset{ z \neq x_{0}}{\sup _{z \in \mathcal{M}}} f(z)=\sup _{y \in \mathcal{M}} f(y)}$. Thus, one gets:
$$
-\left(b \diamondsuit (-f)\right)(x)=\max \{f(x), 0\}. \qed
$$
\end{proof}

\subsection{Architecture of morphological equivariant PDEs based on GAN}
 
We present here a generative model based on morphological equivariant convolutions in PDE-G-CNNs in order to provide nonlinearity in classical CNNs in GANs. We choose to work with GANs due to their simplicity and performance. As shown in the preceding section, morphological convolutions allow to introduce equivariant nonlinearities with respect to other transformations, which should turn out to improve the capacity to better capture data information.

Similarly to GAN, the proposed geometric morphological GAN (GM-GAN) is composed of two networks: a generator (G) and a discriminator (D) which are both multi-layer perceptrons. As detailed in the preceding section, we introduce into the network morphological PDE-based layers through the resolution in a step basis of Hamilton-Jacobi PDEs \eqref{ero_dil}, whose viscosity solutions are given for multiscale erosions and dilations thanks to Proposition~\ref{prop:conv}, as:
\begin{align*}
&\displaystyle{f_t(x) = \inf_{h\in G} \left\{ f\big(\varphi_h(x_0)\big) + c_k\frac{d_\mu\big(\varphi_{h^{-1}}(x), x_0\big)^\frac{k}{k-1}}{t^\frac{1}{k-1}}\right\}} \text{ and }\\
&\displaystyle{f^t(x) = \sup_{h\in G} \left\{ f\big(\varphi_h(x_0)\big) - c_k\frac{d_\mu\big(\varphi_{h^{-1}}(x), x_0\big)^\frac{k}{k-1}}{t^\frac{1}{k-1}}\right\}}. 
\end{align*}
For computation purpose, we provide the distance $d_\mu$ in the geodesic ball by considering the hyperbolic ball: $$B = \lbrace (x_1,x_2) \in \mathbb{R}^2 \text{ such that } x_1^2 + x_2^2 < 1 \rbrace,$$ which is endowed with the metric: $$\mu = \dfrac{4(\ud x_1^2 + \ud x_2^2)}{(1 - \lVert x \rVert^2)^2},$$ where \(\lVert \cdot \rVert\) denotes the Euclidean norm in $\R^2$. The distance is obtained as follows: $$\displaystyle{d_\mu(x, y) = \text{Argcosh} \left(1 + \frac{2\lVert x - y \rVert^2}{(1 - \lVert x \rVert^2)(1 - \lVert y \rVert^2)}\right)}.$$
Concave structuring functions  $b^{k}_{t}=c_k \dfrac{d_\mu(x_0,\cdotp)^\frac{k}{k-1}}{t^{\frac{1}{k-1}}}$ are represented in Fig.~\ref{struct_fct} for different values of $t$ and $k$ in $]-1;~1[$.\\
\begin{figure}[htbp] 
    \centering
     \begin{tabular}{cccc}
    \includegraphics[scale=0.4]{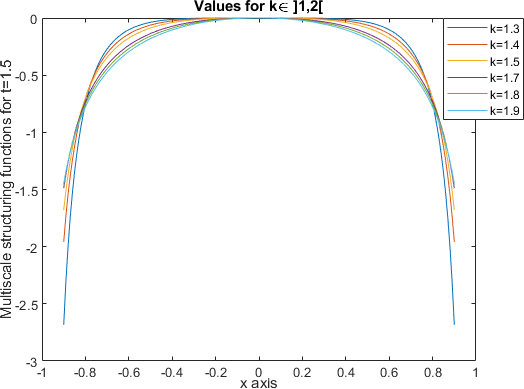} 
    &
    \includegraphics[scale=0.4]{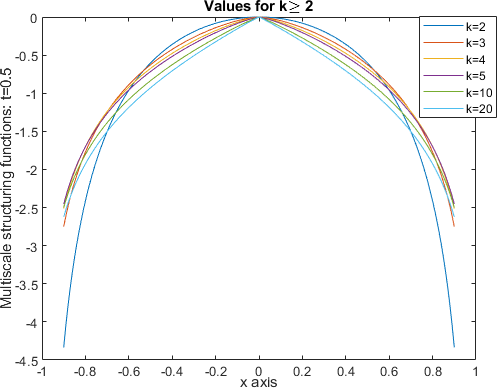} \\
    (a)
    &
    (b)
    \end{tabular}
    \caption{$b_t^k(x)$, $x\in ]-1;~1[$: (a) for $t=1.5$ and $k\in ]1;~2[$. (b) for $t=0.5$ and $k\geq 2$.}
 \label{struct_fct}
\end{figure}

GM-GAN training procedure remains the same as traditional GANs. Specifically, the training procedure is carried out separately but simultaneously. The model takes as input some noise $z$ defined with a prior probability $p_z$, and then, attempts to learn the distribution of the generator $p_g$, by representing a function $G(z; \theta_g)$ from $z$ to the data space. The discriminator network $D$ takes an input image $x$ and finds a function $D(x; \theta_d)$ from $x$ to a single scalar, which is the probability that the image $x$ comes from $p_{data}$ which defines the origin of the sampled images. The output of the $D$ network returns a value close to $1$ if $x$ is a real image from $p_{data}$, and a value very close to $0$ if $x$ comes from $p_g$; otherwise. The main objective of network $D$ is to maximize $D(x)$ for an image coming from the true data distribution $p_{data}$, while minimizing $D(x) = D(G(z; \theta_g))$ for images generated from $p_z$ and not from $p_{data}$. The objective of the generator $G$ is to deceive the $D$ network, meaning to maximize $D(G(z; \theta_g))$. This is equivalent to minimize $1 - D(G(z; \theta_g))$ as $D$ is a binary classifier. This conflict between these objectives is called the minimax game and formulated as follows: $$\displaystyle{\min \max E_{x\sim p_{data}(x)}[\log D(x)] + E_{z\sim p_z(z)}[\log(1 - D(G(z; \theta_g)))]}.$$ The case $p_g = p_{data}$ corresponds to the global optimum of the minimax game. Main contributions of the proposed GM-GAN rely on the equivariance property and non linearity characteristics brought out by group morphological convolutions and their ability to extract thin geometrical features, which lead to richer feature maps and a reduction of the amount training data. \\

For the GM-GAN generator, let \(x\) be the input data into the morphological layer called {\it Morphoblock}. Then, \(x\) goes first through a multiscale morphological erosion operation, followed by a multiscale morphological dilation. Afterwards, both erosion and dilation are followed by a linear convolution. The output of the PDE layer is obtained by a linear combination of the two outputs. The overall architecture of the GM-GAN generator is illustrated in Fig.~\ref{fig:f2}.
\begin{figure}[h]
    \centering
    \includegraphics[width=12cm]{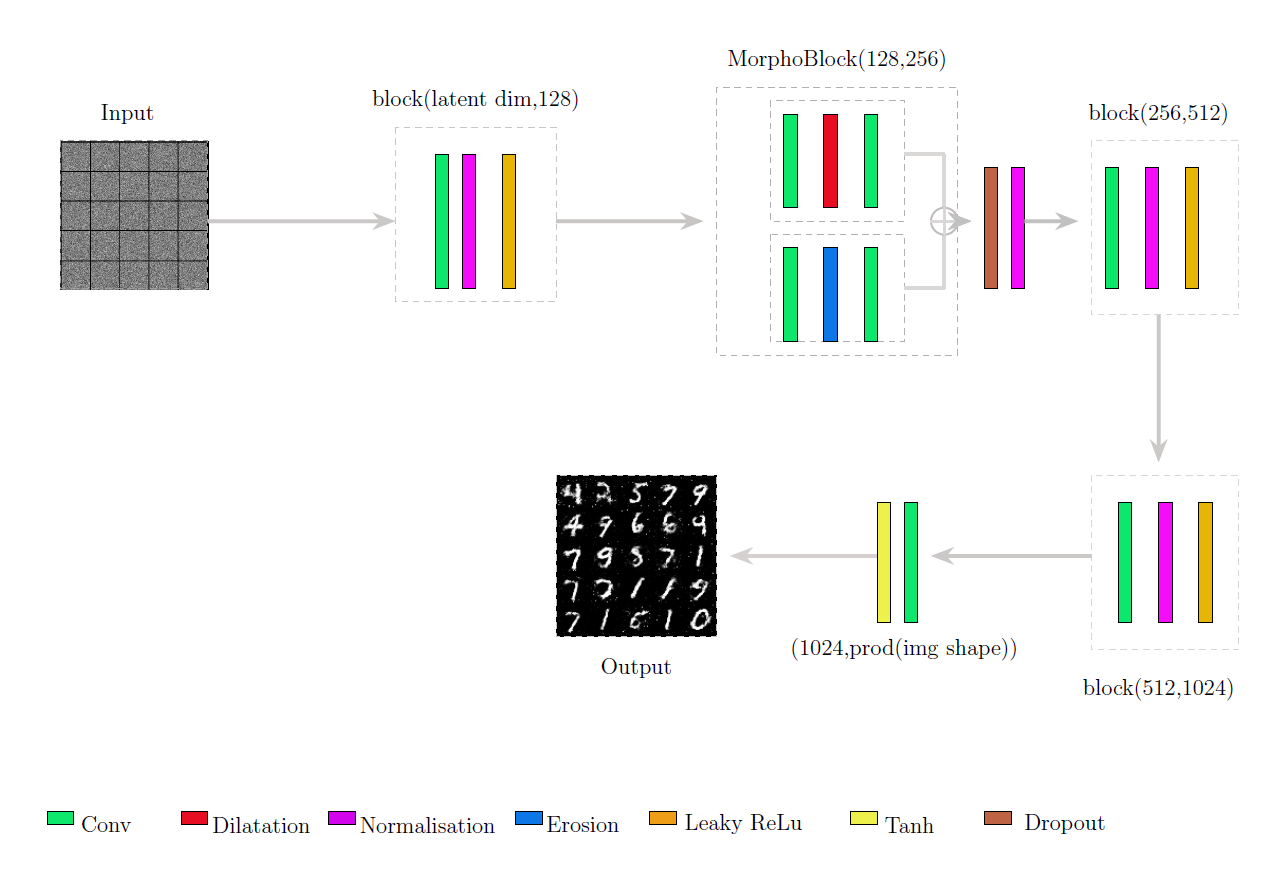}
    \caption{Architecture of GM-GAN generator.}
    \label{fig:f2}
\end{figure}

\section{Numerical experiments}\label{sect6}

GM-GAN and GAN are applied to MNIST dataset. MNIST database consists of $70,000$ black-and-white $28$x$28$ images that represent handwritten digits from $0$ to $9$. It is divided into a training set of $60,000$ images and a test set of $10,000$ images. Same training parameters are set for GM-GAN and GAN: number of epochs to $200$, the batch size to $64$, the latent space dimensionality to $100$, and the interval between image samples to $400$. Generated images with GM-GAN and GAN are displayed in Fig.~\ref{fig:f3} showing higher generation quality with GM-GAN in comparison to traditional GAN. 
\begin{figure}[htbp]
  \centering
  \begin{subfigure}{0.22\textwidth}
    \includegraphics[width=\linewidth]{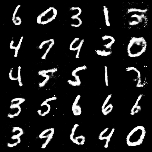}
    \caption{GM-GAN: $75$}
    \label{subfig:m70}
  \end{subfigure}\hspace{1em}%
  \begin{subfigure}{0.22\textwidth}
    \includegraphics[width=\linewidth]{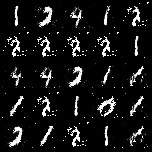}
    \caption{GAN: $75$}
    \label{subfig:gan70}
  \end{subfigure}\hspace{1em}%
  \begin{subfigure}{0.22\textwidth}
    \includegraphics[width=\linewidth]{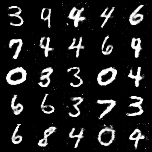}
    \caption{GM-GAN: $100$}
    \label{subfig:m90}
  \end{subfigure}\hspace{1em}%
  \begin{subfigure}{0.22\textwidth}
    \includegraphics[width=\linewidth]{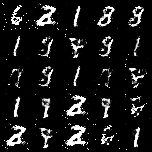}
    \caption{GAN: $195$}
    \label{subfig:gan195}
  \end{subfigure}
  \\ 
  \begin{subfigure}{0.22\textwidth}
    \includegraphics[width=\linewidth]{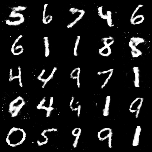}
    \caption{GM-GAN: $80$}
    \label{subfig:m71}
  \end{subfigure}\hspace{1em}%
  \begin{subfigure}{0.22\textwidth}
    \includegraphics[width=\linewidth]{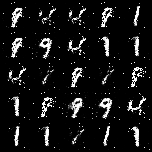}
    \caption{GAN: $80$}
    \label{subfig:gan71}
  \end{subfigure}\hspace{1em}%
  \begin{subfigure}{0.22\textwidth}
    \includegraphics[width=\linewidth]{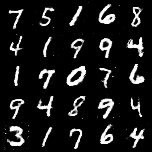}
    \caption{GM-GAN: $105$}
    \label{subfig:m91}
  \end{subfigure}\hspace{1em}%
  \begin{subfigure}{0.22\textwidth}
    \includegraphics[width=\linewidth]{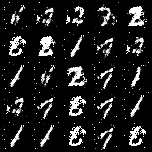}
    \caption{GAN: $196$}
    \label{subfig:gan196}
  \end{subfigure}
  \\
  \begin{subfigure}{0.22\textwidth}
    \includegraphics[width=\linewidth]{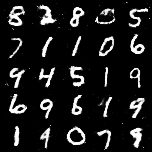}
    \caption{GM-GAN: $85$}
    \label{subfig:m72}
  \end{subfigure}\hspace{1em}%
    \begin{subfigure}{0.22\textwidth}
    \includegraphics[width=\linewidth]{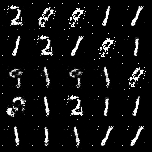}
    \caption{GAN: $85$}
    \label{subfig:gan72}
  \end{subfigure}\hspace{1em}%
      \begin{subfigure}{0.22\textwidth}
    \includegraphics[width=\linewidth]{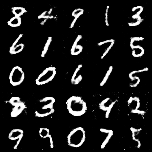}
    \caption{GM-GAN: $110$}
    \label{subfig:m92}
  \end{subfigure}\hspace{1em}%
      \begin{subfigure}{0.22\textwidth}
    \includegraphics[width=\linewidth]{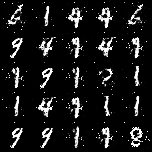}
    \caption{GAN: $197$}
    \label{subfig:gan197}
  \end{subfigure} 
  \\ 
  \begin{subfigure}{0.22\textwidth}
    \includegraphics[width=\linewidth]{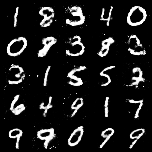}
    \caption{GM-GAN:$90$}
    \label{subfig:m73}
  \end{subfigure}\hspace{1em}%
  \begin{subfigure}{0.22\textwidth}
    \includegraphics[width=\linewidth]{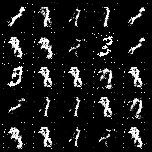}
    \caption{GAN: $90$}
    \label{subfig:gan73}
  \end{subfigure}\hspace{1em}%
  \begin{subfigure}{0.22\textwidth}
    \includegraphics[width=\linewidth]{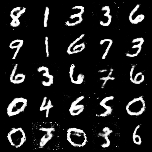}
    \caption{GM-GAN: $115$}
    \label{subfig:m93}
  \end{subfigure}\hspace{1em}%
  \begin{subfigure}{0.22\textwidth}
    \includegraphics[width=\linewidth]{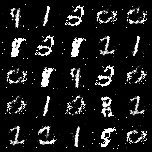}
    \caption{GAN: $198$}
    \label{subfig:gan198}
  \end{subfigure}
  \\ 
  \begin{subfigure}{0.22\textwidth}
    \includegraphics[width=\linewidth]{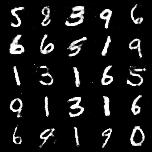}
    \caption{GM-GAN: $95$}
    \label{subfig:m74}
  \end{subfigure}\hspace{1em}%
  \begin{subfigure}{0.22\textwidth}
    \includegraphics[width=\linewidth]{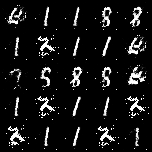}
    \caption{GAN: $95$}
    \label{subfig:gan74}
  \end{subfigure}\hspace{1em}%
  \begin{subfigure}{0.22\textwidth}
    \includegraphics[width=\linewidth]{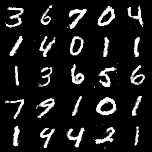}
    \caption{GM-GAN: $120$}
    \label{subfig:m94}
  \end{subfigure}\hspace{1em}%
  \begin{subfigure}{0.22\textwidth}
    \includegraphics[width=\linewidth]{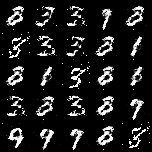}
    \caption{GAN: $199$}
    \label{subfig:gan199}
  \end{subfigure}
  \\ 

  \caption{Image generation using MNIST: GM-GAN vs. GAN.}
  \label{fig:f3}
\end{figure}
This can be seen by comparing images produced at epochs $70$ to $95$ with GM-GAN (Figs.~\ref{subfig:m70}, \ref{subfig:m71}, \ref{subfig:m72}, \ref{subfig:m73} and \ref{subfig:m74}) and those generated with GAN at same epochs (Figs.~\ref{subfig:gan70}, \ref{subfig:gan71}, \ref{subfig:gan72}, \ref{subfig:gan73} and \ref{subfig:gan74}). For instance, some digits are clearly identifiable with GM-GAN based generation, whereas it is almost impossible to recognize the digits with GAN based ones. We also observe that the images generated with GM-GAN at epochs going from $100$ to $120$ (Figs.~\ref{subfig:m90}, \ref{subfig:m91}, \ref{subfig:m92}, \ref{subfig:m93} and \ref{subfig:m94}) are of better quality than generated ones with GAN for the last five epochs going from epoch $195$ to $199$ (Figs.~\ref{subfig:gan195}, \ref{subfig:gan196}, \ref{subfig:gan197}, \ref{subfig:gan198} and \ref{subfig:gan199}). To better discriminate that fact, we zoom in on some areas in images generated at epochs $85$, $92$ and $96$ (Figs.~\ref{fig:f4}-(a)-(b), (c)-(d) and (e)-(f); respectively), and highlight the realistic variations between the generated images of the same digit. This indicates that GM-GAN has a deeper understanding of the sample characteristics and is capable of generalizing them beyond the specific examples they are trained on. This can be observed in Fig.~\ref{fig:f4}-(b) with digits $3$ and $6$, in Fig.~\ref{fig:f4}-(d) with digits $2$ and $8$, and in Fig.~\ref{fig:f4}-(f) with digits $9$ and $7$.
\begin{figure}[htbp]
\centering
\begin{tikzpicture}[zoomboxarray]

      \node [image node] { \includegraphics[scale=0.46]{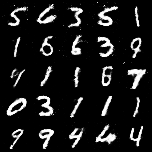} };
   
    \node[draw ] at (10, 2) {Epoch 85};

    \zoombox[magnification=1.2,color code=yellow]{0.6,0.8}

    \zoombox[magnification=1.2, color code=orange]{0.4,0.2}

    \zoombox[magnification=1.2,color code=lime]{0.8,0.4}
     
\end{tikzpicture}

\begin{tikzpicture}[zoomboxarray]

    \node [image node] { \includegraphics[scale=0.46]{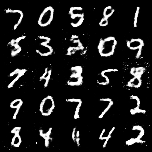} }; 
      \node[draw ] at (10, 2) {Epoch 92};


    \zoombox[magnification=1.2,color code=yellow]{0.8,0.78}

    \zoombox[magnification=1.2, color code=orange]{0.2,0.2}

    \zoombox[magnification=1.2,color code=lime]{0.8,0.2}

\end{tikzpicture}

\begin{tikzpicture}[zoomboxarray]
    
    \node [image node] { \includegraphics[scale=0.46]{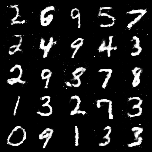} }; 
      \node[draw ] at (10, 2) {Epoch 96};

    \zoombox[magnification=1.28,color code=yellow]{0.5,0.78}

    \zoombox[magnification=1.28, color code=orange]{0.7,0.39}

    \zoombox[magnification=1.2,color code=lime]{0.8,0.2}

\end{tikzpicture}

\caption{Zoom in on images generated with GM-GAN at different epochs.}
\label{fig:f4}
\end{figure}

GM-GAN complexity is also reduced throughout the equivariance property by eliminating the need to learn symmetries. This is illustrated by reducing MNIST training dataset by a half and comparing generated images at epoch $42$. GM-GAN results (Fig.~\ref{subfig:m30k}) show again better image quality and high variations of generated digits in comparison to GAN (Fig.~\ref{subfig:g30k}). Results highlight the importance of equivariance in morphological operators, turning out to dataset reduction without significantly impacting generation results (see Fig.~\ref{subfig:m42} for GM-GAN and Fig.~\ref{subfig:g42} for images generated at the same epoch using the hole dataset).\\
\begin{figure}[htbp]
  \centering
    \begin{subfigure}{0.23\textwidth}
    \includegraphics[width=\linewidth]{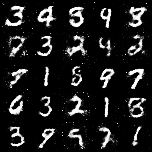}
    \caption{GM-GAN(1/2)}
    \label{subfig:m30k36}
  \end{subfigure}\hspace{1em}%
  \begin{subfigure}{0.23\textwidth}
    \includegraphics[width=\linewidth]{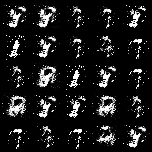}
    \caption{GAN($1/2$)}
    \label{subfig:g30k36}
  \end{subfigure}\hspace{1em}%
  \begin{subfigure}{0.23\textwidth}
    \includegraphics[width=\linewidth]{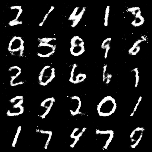}
    \caption{GM-GAN}
    \label{subfig:m36}
  \end{subfigure}\hspace{1em}%
  \begin{subfigure}{0.23\textwidth}
    \includegraphics[width=\linewidth]{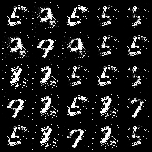}
    \caption{GAN}
    \label{subfig:g36}
  \end{subfigure}
  \\
  \begin{subfigure}{0.23\textwidth}
    \includegraphics[width=\linewidth]{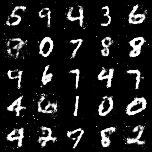}
    \caption{GM-GAN(1/2)}
    \label{subfig:m30k}
  \end{subfigure}\hspace{1em}%
  \begin{subfigure}{0.23\textwidth}
    \includegraphics[width=\linewidth]{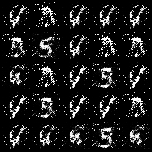}
    \caption{GAN($1/2$)}
    \label{subfig:g30k}
  \end{subfigure}\hspace{1em}%
  \begin{subfigure}{0.23\textwidth}
    \includegraphics[width=\linewidth]{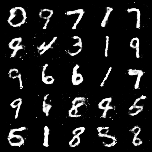}
    \caption{GM-GAN}
    \label{subfig:m42}
  \end{subfigure}\hspace{1em}%
  \begin{subfigure}{0.23\textwidth}
    \includegraphics[width=\linewidth]{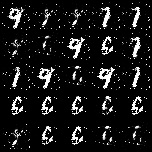}
    \caption{GAN}
    \label{subfig:g42}
  \end{subfigure}
    \caption{Comparison between generated images at epoch $36$ (top line) and $42$ (bottom line) using $-$ Half MNIST dataset: (\ref{subfig:m30k36},\ref{subfig:m30k}) MG-GAN, (\ref{subfig:g30k36},\ref{subfig:g30k}) GAN. Whole MNIST dataset: (\ref{subfig:m36},\ref{subfig:m42}) MG-GAN, (\ref{subfig:g36},\ref{subfig:g42}) GAN.}
  \label{fig:c4fig9}
\end{figure} 
\\
To highlight again the usefulness of morphological equivariant operators, we apply both GM-GAN and GAN models on RotoMNIST; generated images are displayed in Fig.~\ref{fig:f5}. It can be seen in results obtained with GM-GAN from epoch $70$ to $95$ (Figs.~\ref{subfig:mr70}, \ref{subfig:mr71}, \ref{subfig:mr72}, \ref{subfig:mr73}, and \ref{subfig:mr74}) that digits are clearly identifiable and far better than those generated with GAN at the same epochs (Figs.~\ref{subfig:ganr70}, \ref{subfig:ganr71}, \ref{subfig:ganr72}, \ref{subfig:ganr73}, and \ref{subfig:ganr74}) where digits are barely formed. The same is noticed with GM-GAN from epoch $100$ to $120$ (Figs.~\ref{subfig:mr90}, \ref{subfig:mr91}, \ref{subfig:mr92}, \ref{subfig:mr93}, and \ref{subfig:mr94}), in comparison with GAN for the last $5$ epochs (Figs.~\ref{subfig:ganr195}, \ref{subfig:ganr196}, \ref{subfig:ganr197}, \ref{subfig:ganr198}, and \ref{subfig:ganr199}). This demonstrates that GM-GAN is more suitable for data under rotation transformations, and highlights one more time the importance of equivariance for generating satisfactory results under various transformations.

\begin{figure}[htbp]
  \centering
  \begin{subfigure}{0.22\textwidth}
    \includegraphics[width=\linewidth]{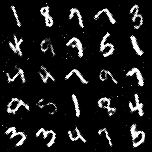}
    \caption{GM-GAN: $75$}
    \label{subfig:mr70}
  \end{subfigure}\hspace{1em}%
  \begin{subfigure}{0.22\textwidth}
    \includegraphics[width=\linewidth]{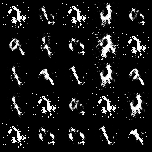}
    \caption{GAN: $75$}
    \label{subfig:ganr70}
  \end{subfigure}\hspace{1em}%
  \begin{subfigure}{0.22\textwidth}
    \includegraphics[width=\linewidth]{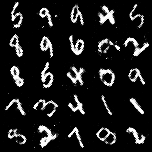}
    \caption{GM-GAN: $100$}
    \label{subfig:mr90}
  \end{subfigure}\hspace{1em}%
  \begin{subfigure}{0.22\textwidth}
    \includegraphics[width=\linewidth]{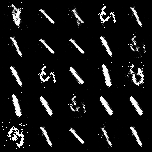}
    \caption{GAN: $195$}
    \label{subfig:ganr195}
  \end{subfigure}
  \\ 
  \begin{subfigure}{0.22\textwidth}
    \includegraphics[width=\linewidth]{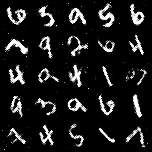}
    \caption{GM-GAN: $80$}
    \label{subfig:mr71}
  \end{subfigure}\hspace{1em}%
  \begin{subfigure}{0.22\textwidth}
    \includegraphics[width=\linewidth]{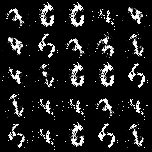}
    \caption{GAN: $80$}
    \label{subfig:ganr71}
  \end{subfigure}\hspace{1em}%
  \begin{subfigure}{0.22\textwidth}
    \includegraphics[width=\linewidth]{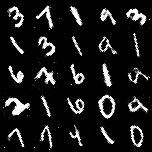}
    \caption{GM-GAN: $105$}
    \label{subfig:mr91}
  \end{subfigure}\hspace{1em}%
  \begin{subfigure}{0.22\textwidth}
    \includegraphics[width=\linewidth]{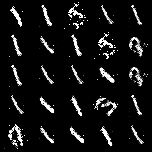}
    \caption{GAN: $196$}
    \label{subfig:ganr196}
  \end{subfigure}
  \\
  \begin{subfigure}{0.22\textwidth}
    \includegraphics[width=\linewidth]{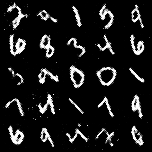}
    \caption{GM-GAN: $85$}
    \label{subfig:mr72}
  \end{subfigure}\hspace{1em}%
    \begin{subfigure}{0.22\textwidth}
    \includegraphics[width=\linewidth]{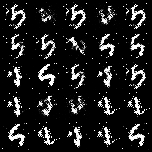}
    \caption{GAN: $85$}
    \label{subfig:ganr72}
  \end{subfigure}\hspace{1em}%
      \begin{subfigure}{0.22\textwidth}
    \includegraphics[width=\linewidth]{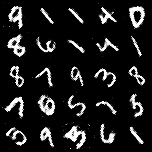}
    \caption{GM-GAN: $110$}
    \label{subfig:mr92}
  \end{subfigure}\hspace{1em}%
      \begin{subfigure}{0.22\textwidth}
    \includegraphics[width=\linewidth]{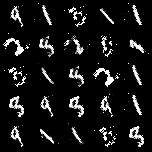}
    \caption{GAN: $197$}
    \label{subfig:ganr197}
  \end{subfigure} 
  \\ 
  \begin{subfigure}{0.22\textwidth}
    \includegraphics[width=\linewidth]{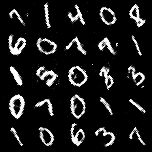}
    \caption{GM-GAN:$90$}
    \label{subfig:mr73}
  \end{subfigure}\hspace{1em}%
  \begin{subfigure}{0.22\textwidth}
    \includegraphics[width=\linewidth]{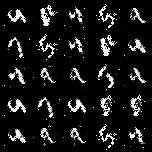}
    \caption{GAN: $90$}
    \label{subfig:ganr73}
  \end{subfigure}\hspace{1em}%
  \begin{subfigure}{0.22\textwidth}
    \includegraphics[width=\linewidth]{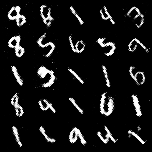}
    \caption{GM-GAN: $115$}
    \label{subfig:mr93}
  \end{subfigure}\hspace{1em}%
  \begin{subfigure}{0.22\textwidth}
    \includegraphics[width=\linewidth]{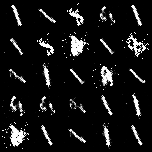}
    \caption{GAN: $198$}
    \label{subfig:ganr198}
  \end{subfigure}
  \\ 
  \begin{subfigure}{0.22\textwidth}
    \includegraphics[width=\linewidth]{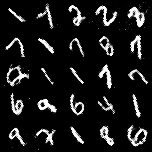}
    \caption{GM-GAN: $95$}
    \label{subfig:mr74}
  \end{subfigure}\hspace{1em}%
  \begin{subfigure}{0.22\textwidth}
    \includegraphics[width=\linewidth]{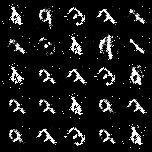}
    \caption{GAN: $95$}
    \label{subfig:ganr74}
  \end{subfigure}\hspace{1em}%
  \begin{subfigure}{0.22\textwidth}
    \includegraphics[width=\linewidth]{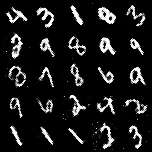}
    \caption{GM-GAN: $120$}
    \label{subfig:mr94}
  \end{subfigure}\hspace{1em}%
  \begin{subfigure}{0.22\textwidth}
    \includegraphics[width=\linewidth]{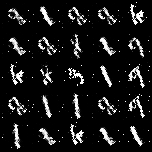}
    \caption{GAN: $199$}
    \label{subfig:ganr199}
  \end{subfigure}
  \\ 

  \caption{Image generation using RotoMNIST: GM-GAN vs. GAN.}
  \label{fig:f5}
\end{figure}
Quantitative evaluations are provided using the Fréchet Inception Distance (FID). A low FID indicates a high similarity between generated and real data, corresponding to good generation quality. In Fig.~\ref{fid}, we present the FID curves of both models over epochs (taking FID of generated images at intervals of $10$ epochs) on both MNIST and RotoMNIST datasets. It can be seen that starting from epoch $40$, FIDs of GM-GAN generated results are significantly lower than ones generated using GAN, which confirms the qualitative results discussed just above.
\begin{figure}[!] 
    \centering
     \begin{tabular}{cccc}
    \includegraphics[scale=0.3]{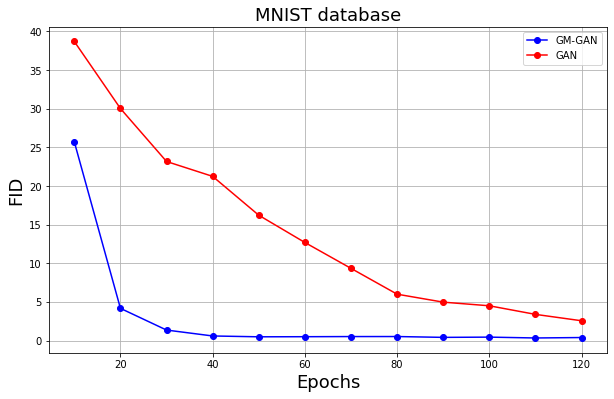} 
    &
    \includegraphics[scale=0.3]{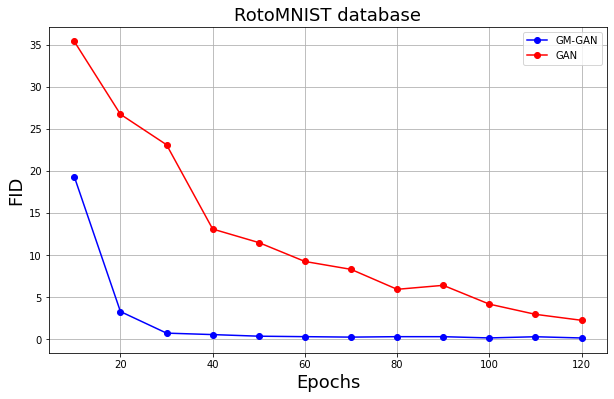} \\
    (a)
    &
    (b)
    \end{tabular}
    \caption{FID using GM-GAN vs. GAN with: (a) MNIST. (b) RotoMNIST.}
 \label{fid}
\end{figure} 
\renewcommand{\arraystretch}{1} 
\begin{table}[htbp]
  \centering
  \begin{tabular}{|c|c|c|}
    \hline
    \multirow{2}{*}{Models} & \multicolumn{2}{c|}{Metrics} \\
    \cline{2-3}
    & KL divergence & FID metric\\
    \hline
    GM-GAN & $0.95$ & $0.93$ \\
    \hline
    GAN & $1.07$ & $15.55$ \\
    \hline
  \end{tabular}
  \caption{KL and FID metrics for GM-GAN an GAN.}
  \label{tab:metric}
\end{table}

As seen in Table~\ref{tab:metric}, GM-GAN exhibits lower KL and much lower FID than GAN, suggesting that data generated with GM-GAN is closer, more realistic and trustworthy to the real data in terms of feature distribution.

\section{Conclusion and perspectives}\label{sect7}

We have proposed here a geometric generative GM-GAN model based on PDE-G-CNNs and built from derived equivariant morphological operators and geometric image processing techniques. The proposed equivariant morphological PDE layers are composed of multiscale dilations and erosions without any need to approximate convolutions kernels, and meanwhile, group symmetries are defined on Lie groups allowing a geometrical interpretability of GM-GAN with left invariance properties. As shown by preliminary results on MNIST and RotoMNIST datasets, preliminary qualitative and quantitative results show noticeable improvements compared classical GAN. Indeed, thin image features are better extracted by accounting intrinsic geometric features at multiscale levels, and the network complexity is reduced. The proposed approach can be extended to various generative models. Future works include applying GM-GAN on other datasets and designing fully equivariant generative models entirely based on PDE-G-CNNs.

\newpage
\bibliographystyle{splncs04}
\bibliography{Le_Thier_bib,biblio}

\end{document}